\newtheorem{definition}{Definition}
\newtheorem{proposition}{Proposition}
\newtheorem{lemma}{Lemma}
\newtheorem{property}{Property}
\newtheorem{remark}{Remark}
\newcolumntype{d}[1]{D{.}{.}{#1}}
\def\R{\mathbb{R}}
\def\1{\mathbbm{1}}
\def\x{\mathbf{x}}
\def\X{\mathbf{X}}
\def\u{\mathbf{u}}
\def\v{\mathbf{v}}
\def\y{\mathbf{y}}
\def\E{\mathbf{E}}
\def\z{\mathbf{z}}
\def\m{\mathbf{m}}
\def\b{\mathbf{b}}
\def\e{\mathbf{e}}
\def\s{\mathbf{s}}
\def\P{\mathbf{P}}
\def\U{\mathbf{U}}
\def\V{\mathbf{V}}
\def\w{\mathbf{w}}
\def\z{\mathbf{z}}
\newcommand{\Ex}{{\rm I\kern-.3em E}}
\newcommand{\Xspace}{\mathcal{X}}
\newcommand{\Dspace}{\mathcal{D}}
\newcommand{\Renyi}{\mathbb{D}}
\newcommand{\algo}{{\mathcal{A}}}
\newcommand{\image}{\text{Im }}
\DeclareMathOperator{\union}{\cup}
\newcommand{\proba}{\mathbb{P}}
\DeclareMathOperator{\expectation}{\mathbb{E}}
\DeclareMathOperator{\variance}{\mathbb{V}}
\newcommand{\argmin}{\mathop{\mathrm{arg\,min}}}
\newcommand{\normal}{{\cal N}}
\newcommand{\rev}[1]{\textcolor{black}{#1}}
\newcommand{\livain}[1]{\todo[inline]{{\textbf{L:} \emph{#1}}}}
\icmltitlerunning{Differentially Private Sliced Wasserstein Distance}
\begin{document}

\twocolumn[
\icmltitle{Differentially Private Sliced Wasserstein Distance}

\begin{icmlauthorlist}
	\icmlauthor{Alain Rakotomamonjy}{criteo,univ}
		\icmlauthor{Liva Ralaivola}{criteo}
\end{icmlauthorlist}
\icmlaffiliation{criteo}{Criteo AI Lab, Paris, France}
\icmlaffiliation{univ}{LITIS EA4108, Université de Rouen Normandie, Saint-Etienne du Rouvray, France}
\icmlcorrespondingauthor{Alain Rakotomamonjy}{alain.rakoto@insa-rouen.fr}
\vskip 0.3in
]
\printAffiliationsAndNotice{} \begin{abstract}

Developing machine learning methods that are privacy preserving is today a central topic of research,
with huge practical impacts. Among the numerous ways to address privacy-preserving learning, we here take the perspective of computing the divergences between distributions under the Differential Privacy (DP) framework --- being able
to compute divergences between distributions is pivotal for many machine learning problems, such as
 learning generative models or domain adaptation problems. 
 Instead of resorting to the popular gradient-based sanitization method for DP, we tackle the problem at its roots by focusing on the Sliced Wasserstein Distance and seamlessly making it differentially private. Our main contribution is as follows: we analyze the property of adding a Gaussian perturbation 
to the intrinsic randomized mechanism of the Sliced Wasserstein Distance, and we establish the sensitivity
of the resulting differentially private mechanism. One of our important findings is that this DP mechanism transforms the Sliced Wasserstein distance into another distance, that we call the Smoothed Sliced Wasserstein Distance. This new differentially private distribution distance can be plugged into generative models and
domain adaptation algorithms in a transparent way, and we empirically show that
it yields highly competitive performance compared with gradient-based DP approaches from the literature,
with almost no loss in accuracy for the domain adaptation problems that we consider.
\end{abstract}

\section{Introduction}
Healthcare and computational advertising are examples of domains
that could find a tremendous benefit from the continous advances made in Machine Learning (ML).
However, as ethical and regulatory concerns become prominent in these areas, 
there is the need to devise privacy preserving mechanisms allowing i) to
 prevent the access to individual and critical data and ii) to still
leave the door open to the use of elaborate ML methods.
Differential privacy (DP) offers a sound privacy-preserving framework to
tackle both issues and effective DP mechanisms have been designed for, e.g., 
logistic regression and Support Vector Machines 
\cite{rubinstein2009learning,chaudhuri2011differentially}.

Here, we address the problem of devising a differentially private distribution
distance with, in the hindsight, tasks such as learning generative models
and domain adaptation ---which both may rely on a relevant distribution
distance \cite{lee2019sliced,deshpande2018generative}. In particular, we
propose and analyze a mechanism that transforms the sliced Wasserstein distance  (SWD)
\cite{rabin2011wasserstein} into a differentially private distance while 
retaining the scalability advantages and metric properties of the base SWD.
The key ingredient to our contribution: to take advantage of the combination of the embedded sampling 
process of SWD and the so-called Gaussian mechanism.

Our contributions are as follows: i) we analyze the effect of a Gaussian mechanism on the sliced Wasserstein distance and 
we establish the DP-compliance of the resulting mechanism DP-SWD; ii) 
we show that DP-SWD boils down to what we call {\em Gaussian smoothed
SWD}, that inherits some of the key properties of a distance,  a novel result that has value on its own;  iii) extensive empirical analysis on domain adaptation and generative modeling tasks show that the proposed
DP-SWD is competitive, as
 we achieve DP guarantees without almost no loss in accuracy in  
domain adaptation, while being the first to present
a DP generative model on the $64\times 64$ RGB  CelebA dataset.
\paragraph{Outline.} 
Section~\ref{sec:back} states the problem we are interested in and provides
 background on differential privacy and the sliced Wasserstein distance. 
In Section \ref{sec:methods}, we analyze the DP guarantee of
 {\em random direction projections} and we characterize the resulting Gaussian Smoothed Sliced 
Wasserstein distance. Section \ref{sec:models} discusses how this distance can be plugged into domain 
adaptation and generative model algorithms. After discussing
related works in Section \ref{sec:related}, Section \ref{sec:expe} presents empirical
results, showing our ability to effectively learn under DP constraints.

\section{Problem Statement and Background}
\label{sec:back}

\subsection{Privacy, Gaussian Mechanism and Random Direction Projections}
We start by stating the main problem we are interested
in: to show the privacy properties of the 
random mechanism
$${\cal M}(\X) = \X\U + \V,$$
where $\X\in\R^{n\times d}$ is a matrix (a dataset), $\U\in\R^{d\times k}$ a random
matrix made of $k$ uniformly distributed unit-norm vectors of $\R^d$
and $\V\in\R^{n\times k}$ a matrix of $k$ zero-mean Gaussian vectors (also called
the {\em Gaussian Mechanism}).

We show that ${\cal M}$ is differentially private and that
it is the core component of the Sliced Wassertein Distance (SWD) computed thanks
to {\em random projection directions} (the unit-norm matrix $\U$) and, 
in turn, SWD inherits\footnote{This is a slight abuse of vocabulary
as the Slice\rev{d} Wasserstein Distance takes two inputs and not only one.} 
the differential private property of ${\cal M}$. In the way, we show that
the population version of the resulting differentially private SWD
{\em is} a distance, that we dub the Gaussian Smoothed SWD. 

\subsection{Differential Privacy (DP)}
DP is a theoretical framework to
analyze the privacy guarantees of algorithms. It rests
on the following definitions.

\begin{definition}[Neighboring datasets] Let $\Xspace$ (e.g. $\Xspace = \R^d$)  be a 
	{\em domain} and $\Dspace\doteq\union_{n=1}^{+\infty}\Xspace^n$.
	$D, D'\in \Dspace$ are {\em neighboring datasets} if $|D|=|D'|$ and they
	differ from one record.
\end{definition}

\begin{definition}[\citet{dwork2008differential}]
	\label{def:dp} Let $\varepsilon,\delta > 0$. 
	Let $\mathcal{A}:\Dspace\to \image \algo$ be a {\em randomized} algorithm, where $\image \algo$ is the image
	of $\Dspace$ through $\algo$. $\algo$ is $(\varepsilon,\delta)$-differentially private, or $(\varepsilon,\delta)$-DP, if for all neighboring datasets $D,D^\prime\in\Dspace$ and for all 
	sets of outputs	$\mathcal{O}\in\image \algo$, the following inequality holds:
	$$
	\proba[\mathcal{A}(D) \in \mathcal{O}] \leq 	e^\varepsilon \proba[\mathcal{A}(D^\prime) \in \mathcal{O}] + \delta
	$$
	where the probability relates to the randomness of $\algo$.
\end{definition}

\begin{remark}\label{rem:distribution} Note that given $D\in\Dspace$ and a randomized 
	algorithm $\algo:\Dspace\to \image \algo$, $\algo(D)$
	defines a distribution $\pi_D:\image \algo\to[0,1]$ on (a subspace of) $\image \algo$ with $$\forall 
 \mathcal{O}\in\image \algo,\, \pi_D(\mathcal{O})\propto\proba[\mathcal{A}(D) \in \mathcal{O}],$$
 where $\propto$ means equality up to a normalizing factor.
\end{remark}

The following notion of privacy, proposed by \citet{mironov2017}, which is based on 
Rényi $\alpha$-divergences and its connections to $(\varepsilon,\delta)$-differential privacy
 will ease the exposition of our results (see also \cite{asoodeh2020three,balle18a,wang2019subsampled}):
\begin{definition}[\citet{mironov2017}]\label{def:rdp}
Let $\varepsilon> 0$ and $\alpha>1$. A randomized algorithm $\mathcal{A}$ is  $(\alpha, \varepsilon)$-Rényi differential private or  $(\alpha, \varepsilon)$-RDP, if for any neighboring datasets $D,D^\prime\in\Dspace$,
$$
\Renyi_\alpha\left(\mathcal{A}(D)\| \mathcal{A}(D^\prime)\right) \leq \varepsilon
$$
where $\Renyi_\alpha(\cdot\|\cdot)$ is the Rényi $\alpha$-divergence \cite{renyi1961} between two distributions (cf. Remark~\ref{rem:distribution}).
\end{definition}

\begin{proposition}[\citet{mironov2017}, Prop. 3] An $(\alpha, \varepsilon)$-RDP 
	mechanism is also $(\varepsilon + \frac{log(1/\delta)}{\alpha-1},\delta)$-DP, $\forall\delta\in(0,1)$. 
\end{proposition}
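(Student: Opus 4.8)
The plan is to argue directly from the definition of the Rényi divergence as a moment of the density ratio. Fix neighboring datasets $D,D'$, write $P$ and $Q$ for the output distributions $\algo(D)$ and $\algo(D')$ of Remark~\ref{rem:distribution} (with densities $p,q$), and set $\varepsilon' \doteq \varepsilon + \frac{\log(1/\delta)}{\alpha-1}$. The $(\alpha,\varepsilon)$-RDP hypothesis unpacks to the moment bound $\int p(x)^\alpha q(x)^{1-\alpha}\,dx = e^{(\alpha-1)\Renyi_\alpha(P\|Q)} \le e^{(\alpha-1)\varepsilon}$. First I would reduce the target DP inequality $P(\mathcal{O}) \le e^{\varepsilon'}Q(\mathcal{O}) + \delta$, required for every measurable output set $\mathcal{O}$, to controlling a single worst-case event.

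For the reduction, note that $P(\mathcal{O}) - e^{\varepsilon'}Q(\mathcal{O}) = \int_{\mathcal{O}}\bigl(p(x) - e^{\varepsilon'}q(x)\bigr)\,dx$, and the integrand is positive exactly on the \emph{bad} set $S \doteq \{x : p(x) > e^{\varepsilon'} q(x)\}$ where the privacy-loss ratio exceeds the threshold $e^{\varepsilon'}$. Hence the left-hand side is maximized by taking $\mathcal{O} = S$, giving $P(\mathcal{O}) - e^{\varepsilon'}Q(\mathcal{O}) \le \int_S(p - e^{\varepsilon'}q)\,dx \le \int_S p(x)\,dx = P(S)$. So it suffices to show that $P(S) \le \delta$ with the stated choice of $\varepsilon'$.

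The last step is a thresholded moment estimate. On $S$ we have $\bigl(p(x)/q(x)\bigr)^{\alpha-1} > e^{(\alpha-1)\varepsilon'}$ since $\alpha>1$, so restricting the Rényi integral to $S$ and factoring out one power of $p$ yields $\int_S p(x)^\alpha q(x)^{1-\alpha}\,dx = \int_S p(x)\bigl(p(x)/q(x)\bigr)^{\alpha-1}\,dx \ge e^{(\alpha-1)\varepsilon'} P(S)$. Combining this with the global moment bound $\int_S p^\alpha q^{1-\alpha} \le \int p^\alpha q^{1-\alpha} \le e^{(\alpha-1)\varepsilon}$ gives $P(S) \le e^{(\alpha-1)(\varepsilon - \varepsilon')}$. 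Substituting $\varepsilon' - \varepsilon = \frac{\log(1/\delta)}{\alpha-1}$ collapses the exponent to $\log\delta$, so $P(S) \le \delta$, which is exactly what the reduction requires.

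I expect the only delicate point to be the reduction step rather than the algebra: recognizing that verifying $(\varepsilon',\delta)$-DP over all output sets reduces to bounding the $P$-mass of the single event $S$ on which the density ratio is large, and then choosing $\varepsilon'$ so that the thresholded moment inequality returns precisely $\delta$. The moment manipulation itself is routine once $S$ is identified, and the use of densities can be replaced by Radon--Nikodym derivatives if one wants to avoid assuming absolute continuity.
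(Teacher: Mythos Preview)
Your argument is correct and is essentially the standard proof due to \citet{mironov2017}: a Markov/Chernoff-type bound on the privacy-loss random variable, obtained by restricting the Rényi moment integral to the bad set $S=\{p>e^{\varepsilon'}q\}$ and reading off $P(S)\le e^{(\alpha-1)(\varepsilon-\varepsilon')}=\delta$. The paper itself does not give a proof of this proposition; it merely quotes the result from Mironov, so there is nothing further to compare against.
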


\begin{remark}
\label{rem:gaussian_mechanism}
	A folk method to make up an (R)DP algorithm based a function $f:\Xspace\to\R^d$ 
is the {\em Gaussian mechanism} $\mathcal{M}_\sigma$ defined as follows:
$$\mathcal{M}_\sigma f(\cdot) = f(\cdot) + \v$$
where $\v\sim\mathcal{N}(0,\sigma^2I_d)$. If $f$ has $\Delta_2$- (or $\ell_2$-) {\em sensitivity}
 $$\Delta_2f \doteq \max_{D,D^\prime\text{neighbors}} \|f(D) - f(D^\prime)\|_2,$$
 then $\mathcal{M}_\sigma$ is $\left(\alpha, \frac{\alpha \Delta_2^2f }{2 \sigma^2}\right)$-RDP.
\end{remark}

As we shall see, the role of $f$ will be played by the Random Direction Projections operation or the
Sliced Wasserstein Distance (SWD), a randomized algorithm itself, and the
mechanism to be studied is the composition of
two random algorithms, SWD and the Gaussian mechanism.
Proving the (R)DP nature of this mechanism will rely on a 
high probability bound on the sensitivy of the Random Direction Projections/SWD combined with the result
of Remark~\ref{rem:gaussian_mechanism}.

\subsection{Sliced Wasserstein Distance}

Let $\Omega \in \R^d $ be a probability space and $\mathcal{P}(\Omega)$ the set of
all probability measures over $\Omega$. The Wasserstein distance between two measures
$\mu$, $\nu\in \mathcal{P}(\Omega)$ is based on the so-called Kantorovitch
relaxation of the optimal transport problem, which consists in finding a joint probability
distribution $\gamma^\star \in \mathcal{P}(\Omega \times \Omega)$ such that
\begin{equation}
	\gamma^\star \doteq \argmin_{\gamma \in \Pi(\mu,\nu)} \int_{\Omega \times \Omega} 
	c(x,x^\prime) d\gamma(x,x^\prime)
\end{equation} 
where $c(\cdot,\cdot)$ is a metric on $\Omega$, known as the {\em ground cost} (\rev{which in our case will be the Euclidean distance}), $\Pi(\mu,\nu)\doteq \{ \gamma \in \mathcal{P}(\Omega \times \Omega) |
\pi_{1\#} \gamma=\mu,\pi_{2\#} \gamma=\nu\}$ and $\pi_1, \pi_2$ are
the marginal projectors of $\gamma$ on each of its coordinates. 
The minimizer of this problem is the {\em optimal transport plan} and for $q\geq 1$, the $q$-Wasserstein distance is
\begin{equation}
	W_q(\mu,\nu) = \Big ( \inf_{\gamma \in \Pi(\mu,\nu)} \int_{\Omega \times \Omega} 
	c(x,x^\prime)^q d\gamma(x,x^\prime) \Big)^\frac{1}{q}
\end{equation} 
A  case of prominent interest for our
work is that of one-dimensional measures, for which it was shown by \citet{rabin2011wasserstein,bonneel2015sliced}
that the Wasserstein distance admits a closed-form solution  
\rev{which is $$
W_q(\mu,\nu) \doteq \Big(\int_0^1 |F^{-1}_\mu(z) - F^{-1}_\nu(z)|^qdz
\Big)^\frac{1}{q}
$$
where $F^{-1}_\cdot$ is the inverse cumulative distribution function of the related distribution.
}
This 
combines well with the idea of projecting high-dimensional probability distributions onto random 1-dimensional spaces and
then computing the Wasserstein distance,  an operation which
 can be theoretically formalized through the use of the Radon transform  \cite{bonneel2015sliced}, leading to the so-called Sliced Wasserstein Distance
$$
\text{SWD}_q^q(\mu,\nu) \doteq \int_{\mathbb{S}^{d-1}}  W_q^q(\mathcal{R}_\u \mu,\mathcal{R}_\u \nu) u_d(\u)d\u 
$$
where $\mathcal{R}_\u$ is the Radon transform of a probability distribution so that
\rev{
\begin{equation}
	 \mathcal{R}_\u \mu(\cdot) = \int \mu(\s) \delta(\cdot - \s^\top \u)d\s
	 \label{eq:radon}
\end{equation}}
with $\u \in \mathbb{S}^{d-1}\doteq \{\u \in \R^d : \|\u\|_2=1\}$ be the $d$-dimensional hypersphere and $u_d$ the uniform distribution on  $\mathbb{S}^{d-1}$.

\rev{In practice, we only have access to $\mu$ and $\nu$
 through samples, and the proxy distributions of $\mu$ and $\nu$
 to handle are   $\hat \mu \doteq \frac{1}{n} \sum_{i=1}^{n} \delta_{\x_i}$
and $\hat \nu \doteq \frac{1}{m} \sum_{i=1}^{m} \delta_{\x^\prime_i}.$
By plugging those distributions into Equation~\ref{eq:radon},
it is easy to show that the Radon transform depends only the projection of $\x$ on $\u$. 
Hence, computing the sliced Wasserstein distance amounts to computing the average of 1D Wasserstein distances over a set of random directions $\{\u_j\}_{j=1}^k$, with each 1D probability distribution obtained by projecting a sample (of $\hat \mu$ or $\hat \nu$) on $\u_i$ by $\x^\top \u_i$.}
This gives the following  empirical approximation of SWD
\begin{equation}\label{eq:approxswd}
{\text{SWD}}_q^q \approx \frac{1}{k} \sum_{j=1}^k W_q^q\left(\frac{1}{n} \sum_{i=1}^{n} \delta_{{\x_i}^\top \u_j},
\frac{1}{m} \sum_{i=1}^{m} \delta_{{\x_i^\prime}^\top\u_j}\right)
\end{equation}
given $\U$ a matrix of $\R^{d \times k}$ of unit-norm column $\u_j$.

\begin{algorithm}[t]
	\caption{Private and Smoothed Sliced Wasserstein Distance}
	\label{alg:dpswd}
	\begin{algorithmic}[1]
		\REQUIRE{ A public $\{\X_s\}$ and private $\{\X_t\}$ matrix both in $\R^{n \times d}$, $\sigma$ the standard deviation of a Gaussian distribution, $k$ the number of
		direction in SWD, $q$ the power in the SWD.}	  										\STATE \emph{// random projection}
		\STATE construct random projection matrix $\U \in \R^{d \times k}$ with unit-norm columns.
				\STATE construct two random  Gaussian, with standard deviation $\sigma$ noise, matrices $\V_s$ and $\V_t$ of size $n \times k$ 
										\STATE \emph{// Gaussian mechanism}
		\STATE  compute $\mathcal{M}(\X_s) = \X_s\U + \V_s$, $\mathcal{M}(\X_t) = \X_t\U + \V_t$
		\STATE $\text{DP}_\sigma\text{SWD}_q^q \leftarrow$ compute Equation \eqref{eq:approxswd} using $\mathcal{M}(\X_s)$~and $\mathcal{M}(\X_t)$ as the locations of the Diracs.
						\RETURN  $\text{DP}_\sigma\text{SWD}_q^q$
	\end{algorithmic}
\end{algorithm}
\section{Private and Smoothed Sliced Wasserstein Distance}
\label{sec:methods}

We now introduce how we obtain a differentially
private approximation of the Sliced Wasserstein Distance. To achieve this goal,
we take advantage of the intrinsic randomization process that is embedded in 
the Sliced Wasserstein distance.

\subsection{Sensitivity of Random Direction Projections}
In order to uncover its $(\epsilon,\delta)$-DP , we analyze the sensitivity
of the random direction projection in SWD. 
 Let us consider the matrix 
$\X \in \R^{n \times d}$ representing a dataset 
composed of $n$ examples in dimension $d$ organized in row (each sample being randomly drawn from the distribution $\mu$). 
One  mechanism of interest is 
$$
\mathcal{M}_u(\X) = \X \frac{\u}{\|\u\|_2} + \v.
$$
where $\v$ is a vector whose entries \rev{are} drawn from a zero-mean Gaussian distribution. 
Let $\X$ and $\X^\prime$  be two matrices in $\R^{n \times d}$ that differ
only on one row, say $i$ and such that $ \|\X_{i,:} - \X_{i,:}^\prime\|_2 \leq 1$,
where $\X_{i,:}\in\R^d$ and $\X_{i,:}^\prime\in\R^d$ are the $i$-th row of $\X$ 
and $\X^\prime$, respectively. For ease of notation, we will from now on use
$$\z\doteq(\X_{i,:} - \X_{i,:}^\prime)^{\top}.$$ 

\begin{lemma} 
	\label{lem:single_beta}
	Assume that $\z\in\R^d$ is a unit-norm vector and $\u\in\R^d$ a vector where each entry is drawn independently from $\mathcal{N}(0,\sigma_u^2)$. Then
				$$
 Y\doteq\Big(\z^\top \frac{\u}{\|\u\|_2}\Big)^2 \sim B(1/2, (d-1)/2)
$$
where $B(\alpha,\beta)$ is the beta distribution of parameters
$\alpha,\beta$. \end{lemma}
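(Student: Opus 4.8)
The plan is to exploit two facts: the rotational invariance of an isotropic Gaussian, and the classical Gamma/Beta representation of squared Gaussian components. First I would observe that $\sigma_u^2$ plays no role, since it cancels in the normalized quantity $\u/\norm{\u}_2$; equivalently, $\u/\norm{\u}_2$ is uniformly distributed on $\mathbb{S}^{d-1}$. I would then argue that the law of $\z^\top \u/\norm{\u}_2$ depends on $\z$ only through $\norm{\z}_2$: for any orthogonal matrix $Q$, the vector $Q\u$ has the same distribution as $\u$, so replacing $\z$ by $Q\z$ leaves the distribution of $\z^\top\u/\norm{\u}_2$ unchanged. Since $\z$ is unit-norm, I may therefore take $\z=\e_1$ without loss of generality.

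With this reduction, the quantity becomes explicit:
$$
Y = \Big(\frac{u_1}{\norm{\u}_2}\Big)^2 = \frac{u_1^2}{\sum_{i=1}^d u_i^2} = \frac{u_1^2}{u_1^2 + \sum_{i=2}^d u_i^2}.
$$
Next I would identify each piece as a Gamma variable (up to the common factor $\sigma_u^2$, which again cancels): $u_1^2/\sigma_u^2$ is distributed as a $\chi^2_1$, i.e. a $\mathrm{Gamma}(1/2)$ variable, while $\big(\sum_{i=2}^d u_i^2\big)/\sigma_u^2$ is a $\chi^2_{d-1}$, i.e. a $\mathrm{Gamma}\big((d-1)/2\big)$ variable. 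Crucially, these two are independent because they are built from disjoint sets of the independent coordinates $u_1,\dots,u_d$.

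Finally I would invoke the standard fact that if $A\sim\mathrm{Gamma}(\alpha)$ and $B\sim\mathrm{Gamma}(\beta)$ are independent, then $A/(A+B)\sim B(\alpha,\beta)$, which immediately yields $Y\sim B(1/2,(d-1)/2)$. The argument is essentially routine once set up; the only step requiring genuine care is the rotational-invariance reduction to $\z=\e_1$, since it is what licenses the clean coordinate computation and makes the degrees of freedom ($1$ and $d-1$) transparent. A reader wanting a fully self-contained derivation could alternatively skip the Gamma/Beta lemma and compute the density of $Y$ directly from the uniform density on $\mathbb{S}^{d-1}$ via the change of variables $t=\cos^2\theta$, recovering the same $B(1/2,(d-1)/2)$ density $\propto t^{-1/2}(1-t)^{(d-3)/2}$; I would mention this as a sanity check rather than the primary route.
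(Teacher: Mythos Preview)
Your proposal is correct and follows essentially the same route as the paper: reduce to $\z=\e_1$ by rotational invariance of the isotropic Gaussian, write $Y=u_1^2/\sum_i u_i^2$, identify the numerator and the remaining sum as independent $\Gamma(1/2)$ and $\Gamma((d-1)/2)$ variables (after dividing out $\sigma_u^2$), and conclude via the standard Gamma-to-Beta ratio identity. The paper's argument is a slightly terser version of exactly these steps; your added remarks on the cancellation of $\sigma_u^2$ and the alternative spherical-coordinate sanity check are extra but not a different strategy.
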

\begin{proof} See appendix.
\end{proof}
Instead of considering a randomized mechanism that projects only according to a single
random direction, \rev{we are interested in the whole set of projected (private) data
according to the random directions sampled through the Monte-Carlo approximation 
of the Sliced Wasserstein distance computation~\eqref{eq:approxswd}.} \rev{Our key interest
is therefore in the mechanism
$$
\mathcal{M}(\X) = \X \U + \V
$$
and in the sensitivity of $\X\U$.} Because of its randomness, we are
interested  in a probabilistic tail-bound of $\|\X \U - \X^\prime \U\|_F$, where the matrix $\U$ has columns independently  drawn from $\mathbb{S}^{d-1}$. 

\begin{lemma} 	\label{lem:several_beta}
	Let $\X$ and $\X^\prime$  be two matrices in $\R^{n \times d}$ that differ only in one row, and for that row, say $i$, 
	$\|\X_{i,:} - \X_{i,:}^\prime\|_2 \leq 1$. Denote $\U \in \R^{d \times k}$ and 
	 $\U$ has columns independently  drawn from $\mathbb{S}^{d-1}$. 
		With probability at least $1-\delta$, we have 
	\begin{align}
	\left\|\X \U - \X^\prime \U \right\|_F^2\leq w(k,\delta),
	\intertext{with}
	w(k,\delta)\doteq\frac{k}{d}+\frac{2}{3}\ln\frac{1}{\delta} + \frac{2}{d}\sqrt{k\frac{d-1}{d+2}\ln\frac{1}{\delta}}
	\end{align}
\end{lemma}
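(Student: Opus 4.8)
The plan is to reduce the Frobenius norm to a sum of $k$ scalar squares, recognize each as a Beta random variable through Lemma~\ref{lem:single_beta}, and then apply a Bernstein-type concentration inequality to the sum and invert it to read off $w(k,\delta)$.

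First I would exploit the one-row structure. Since $\X$ and $\X^\prime$ agree on every row except row $i$, the matrix $(\X-\X^\prime)\U$ vanishes on all rows but the $i$-th, whose $j$-th entry equals $(\X_{i,:}-\X_{i,:}^\prime)\u_j=\z^\top\u_j$. Hence $\|\X\U-\X^\prime\U\|_F^2=\sum_{j=1}^k(\z^\top\u_j)^2$. Because each column $\u_j$ is uniform on $\mathbb{S}^{d-1}$ --- which is precisely the law of a normalized isotropic Gaussian vector --- Lemma~\ref{lem:single_beta} shows that, for unit-norm $\z$, each $Y_j\doteq(\z^\top\u_j)^2$ is distributed as $B(1/2,(d-1)/2)$, and the independence of the columns makes the $Y_j$ i.i.d. The hypothesis $\|\z\|_2\le 1$ is absorbed by a WLOG argument: writing $\z=c\hat{\z}$ with $\|\hat{\z}\|_2=1$ and $c\le 1$ gives $Y_j=c^2(\hat{\z}^\top\u_j)^2\le(\hat{\z}^\top\u_j)^2$ pointwise, so the bound obtained for unit-norm $\z$ dominates; moreover $Y_j\in[0,1]$ by Cauchy--Schwarz.

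Next I would record the first two moments of $B(1/2,(d-1)/2)$: the mean is $1/d$ (matching the leading $k/d$ term of $w$) and the variance is $\frac{2(d-1)}{d^2(d+2)}$. Writing $S\doteq\sum_{j=1}^k Y_j$, its mean is $k/d$ and its total variance is $v\doteq\frac{2k(d-1)}{d^2(d+2)}$. The crux is then the application of Bernstein's inequality: since the $Y_j$ are independent and bounded in $[0,1]$, so that $|Y_j-\expectation[Y_j]|\le 1$, with total variance $v$, Bernstein gives $\proba\!\big(S-\tfrac{k}{d}\ge t\big)\le\exp\!\big(-\tfrac{t^2/2}{v+t/3}\big)$. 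Setting the right-hand side equal to $\delta$ and writing $L=\ln(1/\delta)$ reduces to the quadratic $t^2-\tfrac{2L}{3}t-2Lv=0$, whose positive root is $\tfrac{L}{3}+\sqrt{\tfrac{L^2}{9}+2Lv}$; the subadditivity $\sqrt{a+b}\le\sqrt{a}+\sqrt{b}$ bounds this by $\tfrac{2L}{3}+\sqrt{2Lv}$. Substituting $v$ and using $\sqrt{2Lv}=\frac{2}{d}\sqrt{k\frac{d-1}{d+2}\ln\frac{1}{\delta}}$ recovers exactly $w(k,\delta)$, and $\proba(S>w(k,\delta))\le\delta$ is the claimed high-probability bound.

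The main obstacle I anticipate is not any single step but the bookkeeping that forces the constants to align: choosing the Bernstein range constant to be $1$ (which produces the $\tfrac{2}{3}\ln\tfrac{1}{\delta}$ term) and carrying out the quadratic inversion together with the $\sqrt{a+b}\le\sqrt{a}+\sqrt{b}$ relaxation so that the variance contribution collapses into the stated closed form. A secondary point requiring care is the reduction to unit-norm $\z$ and the verification that sampling columns uniformly on $\mathbb{S}^{d-1}$ reproduces the Beta law of Lemma~\ref{lem:single_beta} without introducing any extra normalization factor.
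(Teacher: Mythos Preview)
Your proposal is correct and follows essentially the same route as the paper: reduce $\|\X\U-\X'\U\|_F^2$ to $\sum_{j=1}^k(\z^\top\u_j)^2$, invoke Lemma~\ref{lem:single_beta} to obtain i.i.d.\ $B(1/2,(d-1)/2)$ summands with mean $1/d$ and variance $2(d-1)/(d^2(d+2))$, apply Bernstein's inequality, and invert the resulting quadratic in $t$ to land on $w(k,\delta)$. If anything, you are more explicit than the paper on two points it leaves implicit: the WLOG reduction to $\|\z\|_2=1$ via pointwise domination, and the use of $\sqrt{a+b}\le\sqrt a+\sqrt b$ when solving the quadratic.
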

\begin{proof} See appendix.
\end{proof}

The above bound on the squared sensitivity has been obtained by first showing that the random variable $\left\|\X \U - \X^\prime \U \right\|_F^2$ is the sum of $k$ iid Beta-distributed random variables and then by using a Bernstein inequality. This bound, referred to as {\em the 
Bernstein bound,} is very conservative as soon as $\delta$ is very small. By calling the
Central Limit Theorem (CLT), assuming that $k$ is large enough ($k>30$), we get under the same hypotheses (proof is in the appendix)  that
$$
	w(k,\delta) = \frac{k}{d} + \frac{z_{1-\delta}}{d}\sqrt{\frac{2k(d-1)}{d+2}}
$$
where  $z_{1-\delta}=\Phi^{-1}(1-\delta)$ and $\Phi$ is the cumulative distribution function of a zero-mean unit variance Gaussian distribution. This bound is far tighter
but is not rigourous due to the CLT approximation. Figure~\ref{fig:examplebound}
presents an example of the probability distribution histogram of $ \|(\X - \X^\prime)^\top \U\|_F^2 = \|(\X_{i,:} - \X_{i,:}^\prime)^\top \U\|_2^2$
for two fixed arbitrary $\X_{i,:}$, $\X_{i,:}^\prime$ and  for $10000$ random draws of $\U$. \rev{It shows that the CLT bound is numerically far smaller than the Bernstein bound of Lemma \ref{lem:several_beta}.}
Then, using {the $\w(k,\delta)$-based bounds} 
jointly with the Gaussian
mechanism property gives us the following proposition.

\begin{figure}
	\begin{center}
	\includegraphics[width=7.5cm]{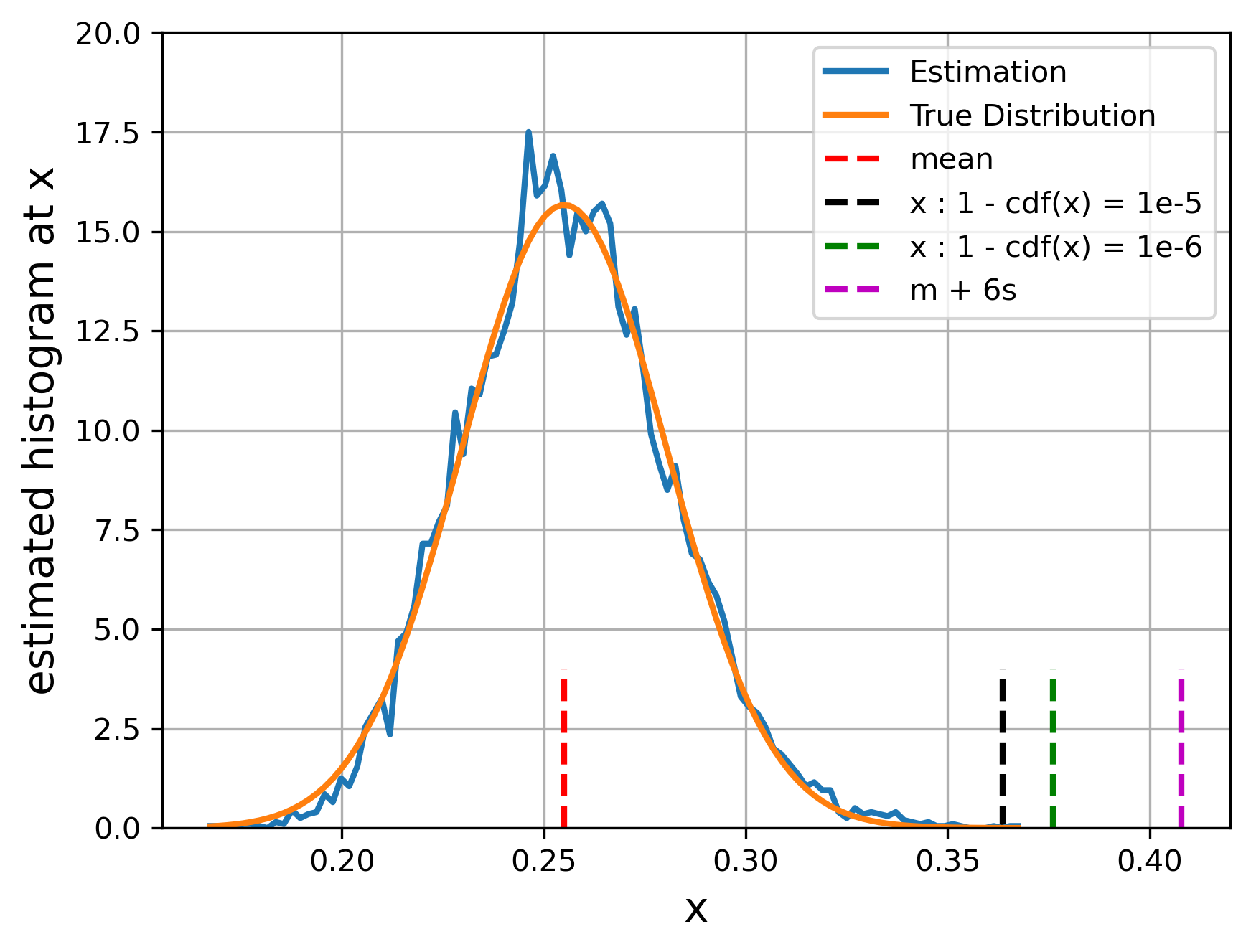}
\end{center}
\vspace{-0.5cm}
\caption{Estimated density probability of $\sum_i^k Y_i$ and the Normal distribution of same mean and standard deviation. Here, we have $k=200$ and $d=784$ which corresponds to the dimensionality of MNIST digits and  the number of random projections we use in the experiments. We illustrate also some  bounds (on the squared sensitivity) that can be derived from this Normal distribution as well as our CLT bound. Note that the Bernstein bound  is above 1 and
in this example, that the CLT bound, which is numerically equal to the inverse CDF of the Normal distribution at desired $\delta$.\label{fig:examplebound}}
\end{figure}

\begin{proposition}
	Let $\alpha>1$ and $\delta \in [0,1/2]$, given a random direction projection matrix $\U \in \R^{d\times k}$, then the Gaussian
	mechanism $\mathcal{M}(\X)= \X\U + \V$, where $\V$ is a Gaussian  matrix in $\R^{n \times k}$ with entries drawn from 
	$\mathcal{N}(0,\sigma^2)$ is $(\frac{\alpha {w(k,\delta/2)}}{2\sigma^2} + \frac{log(2/\delta)}{\alpha-1},\delta)$-DP.
\end{proposition}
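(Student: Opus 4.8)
The plan is to lift the Gaussian-mechanism RDP guarantee of Remark~\ref{rem:gaussian_mechanism} to the present setting, where the $\ell_2$-sensitivity of the query $\X\mapsto\X\U$ is not deterministic but random through $\U$, and is only controlled with high probability by Lemma~\ref{lem:several_beta}. The whole argument rests on conditioning on the projection matrix $\U$, applying the deterministic-sensitivity machinery pointwise, and then integrating back out while paying for the small probability that the sensitivity bound fails.

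First I would fix neighboring datasets $\X,\X'$ and introduce the \emph{good event} $E\doteq\{\U : \|\X\U-\X'\U\|_F^2\le w(k,\delta/2)\}$. Applying Lemma~\ref{lem:several_beta} with failure probability $\delta/2$ gives $\proba[\U\in E]\ge 1-\delta/2$. The key structural observation is that $\U$ is sampled independently of the input dataset, so for any output set $\mathcal{O}$ the output law factorizes as $\proba[\mathcal{M}(\X)\in\mathcal{O}] = \expectation_{\U}[\proba_{\V}[\X\U+\V\in\mathcal{O}]]$, and the representation for $\X'$ uses the very same outer law of $\U$; this common mixing measure is what makes the conditioning legitimate.

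Next I would condition on a fixed realization $\U\in E$ and treat $\X\mapsto\X\U$ as a deterministic query of $\ell_2$-sensitivity at most $\sqrt{w(k,\delta/2)}$. Remark~\ref{rem:gaussian_mechanism} then renders $\X\U+\V$ an $(\alpha,\frac{\alpha w(k,\delta/2)}{2\sigma^2})$-RDP mechanism, and the RDP-to-DP conversion of \citet{mironov2017} (Prop.~3), invoked with slack $\delta/2$, yields $\proba_{\V}[\X\U+\V\in\mathcal{O}]\le e^{\varepsilon_0}\,\proba_{\V}[\X'\U+\V\in\mathcal{O}]+\delta/2$ for every $\U\in E$ and every $\mathcal{O}$, where $\varepsilon_0=\frac{\alpha w(k,\delta/2)}{2\sigma^2}+\frac{\log(2/\delta)}{\alpha-1}$ is exactly the claimed privacy level. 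Integrating this pointwise inequality over $\U$, I would split the expectation into its contribution on $E$ and on $E^c$: on $E$ the inequality produces the $e^{\varepsilon_0}$ factor against $\proba[\mathcal{M}(\X')\in\mathcal{O}]$ plus a term bounded by $\delta/2$, while the contribution on $E^c$ is crudely bounded by $\proba[\U\in E^c]\le\delta/2$. Summing the two $\delta/2$ contributions recovers the total budget $\delta$.

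The main obstacle, and the part deserving real care, is the bookkeeping of the two distinct sources of $\delta$: the failure probability of the tail bound in Lemma~\ref{lem:several_beta} and the conversion slack inherent to passing from RDP to $(\varepsilon,\delta)$-DP. Splitting each as $\delta/2$ (rather than an arbitrary partition) is precisely what makes the constants line up so that the two halves sum to $\delta$ and no term other than $\varepsilon_0$ survives. The only other point to check is that the conditioning step is valid, which, as noted, follows because $\U$ carries no information about the dataset and hence both $\mathcal{M}(\X)$ and $\mathcal{M}(\X')$ are mixtures over the identical law of $\U$.
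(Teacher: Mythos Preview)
Your proof is correct and follows essentially the same route as the paper's own (two-line) argument: invoke Lemma~\ref{lem:several_beta} at level $\delta/2$ to control the sensitivity of $\X\mapsto\X\U$, apply the Gaussian-mechanism RDP bound of Remark~\ref{rem:gaussian_mechanism}, and convert to $(\varepsilon,\delta)$-DP via Proposition~1 with slack $\delta/2$. Your write-up simply makes explicit the conditioning on $\U$, the mixture decomposition over the good event $E$ and its complement, and the bookkeeping that the two $\delta/2$ contributions sum to $\delta$---details the paper leaves implicit.
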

\begin{proof}
The claim derives immediately by the relation between RDP and DP and by Lemma \ref{lem:several_beta}
with $\frac{\delta}{2}$.
\end{proof}

The above DP guarantees apply to the full dataset. Hence, when learning through mini-batches, we benefit from
the so-called privacy amplification
by the ``subsampling'' principle, which ensures that a differentially private mechanism
run on a random subsample of a population \rev{leads to}  higher privacy guarantees
than when run on the full population \cite{balle2018neurips}.  On the contrary,
gradient clipping/sanitization acts individually one each gradient and  thus do not fully benefit from the subsampling amplification, as its DP property may still depend on the batch size \cite{neurips20chen}.

\rev{This Gaussian mechanism on the random direction projections $\mathcal{M}(\X)$ can be related to the definition of the empirical SWD as each  $\x_i^\top \u_j$ corresponds to one entry of $\X\U$. Hence, by adding a Gaussian noise to each projection, we naturally derive our empirical  DP Sliced Wasserstein distance, which inherits the differential property of $\mathcal{M}(\X)$, owing to the post-processing proposition \cite{dwork2014algorithmic}.
}

\subsection{Metric Properties of DP-SWD}

We have analyzed the sensitivity of the random direction projection central to SWD
and we have proposed  a Gaussian mechanism {to obtain}
 a differentially private SWD (DP-SWD) which steps
are \rev{depicted}  in Algorithm \ref{alg:dpswd}.
In our use-cases, DP-SWD is {used}  in a context of learning to match two distributions (one of them {requiring}  to be privately protected).  Hence, the utility guarantees of our DP-SWD  is more related to the ability of the mechanism to distinguish two different distributions rather than on the equivalence between SWD and DP-SWD.
Our goal in this section is to investigate the impact of 
adding Gaussian noise to the source $\mu$ and target
$\nu$ distributions in terms of distance property in the population case.

Since $\mathcal{R}_\u$, as defined in Equation \rev{\eqref{eq:radon}}, is a push-forward operator of
probability distributions, the Gaussian mechanism process
implies that the Wasserstein distance involved in SWD compares
two 1D probability distributions which are respectively the convolution of a Gaussian distribution and  $\mathcal{R}_\u \mu$ and	 $\mathcal{R}_\u \nu$. Hence, we can consider DP-SWD uses as a building
block the 1D smoothed Wasserstein distance between  $\mathcal{R}_\u \mu$ and	 $\mathcal{R}_\u \nu$ with the smoothing being ensured by 
$\mathcal{N}_\sigma$ and its formal definition \rev{being}, for $q \geq 1$,
$$
\text{DP}_\sigma\text{SWD}_q^q(\mu,\nu) \doteq \int_{\mathbb{S}^{d-1}}  W_q^q(\mathcal{R}_\u \mu * \mathcal{N}_\sigma,\mathcal{R}_\u \nu * \mathcal{N}_\sigma ) u_d(\u)d\u 
$$

While some works have analyzed the theoretical properties of the Smoothed Wasserstein distance \cite{goldfeld2020asymptotic,goldfeld2020gaussian}, as far as we know, no theoretical result is available for the smoothed Sliced Wasserstein distance, and we provide in the sequel some \rev{insights} that help its understanding. The following property shows that DP-SWD preserves the identity of indiscernibles.
\begin{property} For continuous probability distributions $\mu$ and $\nu$, we have, for $q \geq 1$,
	$\text{DP}_\sigma\text{SWD}_q^q(\mu,\nu) =  0 \Leftrightarrow \mu=\nu$ $\forall\sigma > 0$.
\end{property} 
\begin{proof}
	Showing that $\mu=\nu \implies \text{DP}_\sigma\text{SWD}_q^q(\mu,\nu) =  0
	$ is trivial as the Radon transform and the convolution are two well-defined maps. We essentially would like to show that $\text{DP}_\sigma\text{SWD}_q^q(\mu,\nu) =  0$ implies $\mu=\nu$.
	If $\text{DP}_\sigma\text{SWD}_q^q(\mu,\nu) =  0$ then
	$\mathcal{R}_\u \mu * \mathcal{N}_\sigma = \mathcal{R}_\u \nu * \mathcal{N}_\sigma$ for almost every $\u \in {\mathbb{S}^{d-1}}$. 
							\rev{As convolution yields to multiplication in the Fourier domain and
	because, the Fourier transform of a Gaussian is also a Gaussian and thus is always positive, 
	one can show that we have for all $\u$ equality of the
	Fourier transforms of $\mathcal{R}_\u \mu$ and $\mathcal{R}_\u \nu$.
					Then, owing to the continuity of $\mu$ and $\nu$ and by the Fourier inversion theorem, we have	$\mathcal{R}_\u \mu = \mathcal{R}_\u \nu$.
	Finally, as for the SWD proof \citep[Prop 5.1.2]{bonnotte2013unidimensional}, this implies that $\mu = \nu$, 	owing to the projection nature of the Radon Transform
	and  because the Fourier transform is injective.}
	\end{proof}

\begin{property}
	For $q \geq 1$, $\text{DP}_\sigma\text{SWD}_q^q(\mu,\nu)$ is symmetric and satisfies the triangle inequality.
\end{property}
\begin{proof} The proof easily derives from the metric properties of Smoothed Wasserstein distance \cite{goldfeld2020gaussian} and details are in the appendix.	
\end{proof}

These properties are strongly relevant in the context of our machine learning 
applications. Indeed, while they do not tell us how the value of DP-SWD compares with  SWD, at fixed $\sigma>0$ or when $\sigma \rightarrow 0$, they show that they can properly act as (for any $\sigma >0$) loss functions  to minimize if we aim to match distributions (at least in the population case). Naturally, there are still several theoretical properties of $\text{DP}_\sigma\text{SWD}_q^q$ that are worth investigating but that are beyond the scope of this work.

\section{DP-Distribution Matching Problems}
\label{sec:models}

\begin{algorithm}[t]
	\caption{Differentially private DANN with DP-SWD }
	\label{alg:DPSWD}
	\begin{algorithmic}[1]
		\REQUIRE{ $\{\X_s,\y_s\}, \{\X_t\}$, respectively the public and private domain, $\sigma$ standard deviation of the Gaussian mechanism}
				\STATE Initialize representation mapping $g$, the classifier $h$  with parameters $\theta_g$, $\theta_h$
		\REPEAT	

		\STATE sample minibatches $\{x_B^s,y_B^s\}$ from $\{x_i^s,y_i^s\}$
		\STATE {compute $g(x_B^s)$}  
		\STATE compute the  classification loss ${L}_c = \sum_{i \in B}  L(y_i^s,h(g(x_i^s)))$
		\STATE $\theta_h \leftarrow \theta_h - \alpha_h \nabla_{\theta_h} {L}_c  $
		\STATE {\emph{//~Private steps : $g(x_B^t)$ is computed in a private way. $g(\cdot)$ is either transferred or has shared weights between public and private clients. }}
		\STATE sample minibatches $\{x_B^t\}$ from $\{x_B^t\}$
		\STATE {compute $g(x_B^t)$}  
		\STATE normalize each sample $g(x_{B}^s)$ wrt $2\max_j{\|g(x_{B,j}^s)\|_2}$
		\STATE normalize each sample $g(x_{B}^t)$ wrt $2\max_j{\|g(x_{B,j}^t)\|_2}$
		\STATE {compute $\text{DP}_\sigma\text{SWD}(g(x_B^s), g(x_B^t))$}
		\STATE {publish $\nabla_{\theta_g} \text{DP}_\sigma\text{SWD}$}
		\STATE {{//\emph{~public step}}}
		\STATE \color{black}$\theta_g \leftarrow \theta_g - \alpha_g \nabla_{\theta_g} L_c - \alpha_g\nabla_{\theta_g} \text{DP}_\sigma\text{SWD}$
		\UNTIL{a convergence condition is met} 
	\end{algorithmic}
\end{algorithm}

There exists several machine learning problems where distance between distributions
is the key part of the loss function to optimize.
In domain adaptation, one learns a classifier from  public source dataset  but looks to adapt it to private target dataset (target domain examples are available only through a privacy-preserving mechanism). In generative modelling,
the goal is to generate samples similar to true data which are 
accessible only through a privacy-preserving mechanism.
In the sequel, we describe how our $\text{DP}_\sigma\text{SWD}_q^q$ distance can be instantiated into these two learning paradigms for measuring adaptation or
for measuring similarity between generated an true samples. 

For unsupervised domain adaptation, given source examples $\X_s$ and their label $\y_s$ and unlabeled
	private target examples $\X_t$, the goal is to learn a classifier $h(\cdot)$ trained on 
	the source examples that generalizes well on the target ones. One usual 
	technique is to learn a representation mapping  $g(\cdot)$ that leads to invariant latent representations, invariance being measured as \rev{some} distance between empirical distributions of mapped source and target samples. Formally, this leads to the following learning problem
	\begin{equation}
		\min_{g,h} L_c(h(g(\X_s)),\y_s) + \text{DP}_\sigma\text{SWD}(g(\X_s), g(\X_t))
	\end{equation}
	where $L_c$ can be any loss function of interest and
	$\text{DP}_\sigma\text{SWD}=\text{DP}_\sigma\text{SWD}_q$. We solve this problem through stochastic gradient descent, similarly to many approaches that use Sliced Wasserstein Distance as a distribution distance \cite{lee2019sliced}, except that in our case, the gradient of 	$\text{DP}_\sigma\text{SWD}$ involving the target dataset  is $(\varepsilon,\delta)$-DP.
	Note that in order to compute the 	$\text{DP}_\sigma\text{SWD}$, one needs the public dataset $\X_s$
	and  the public generator. In practice, this generator can either be
	transferred, after each update, from the private client curating $\X_t$ or
	can be duplicated on that client. 	The resulting algorithm is presented in Algorithm \ref{alg:DPSWD}.

	In the context of generative modeling, we follow the same steps as \citet{deshpande2018generative}  but use our $\text{DP}_\sigma\text{SWD}$ instead of SWD.
				Assuming that we have some examples of data $\X_t$ sampled from a
	given distribution, the goal of the learning problem is to learn a generator
	$g(\cdot)$ to  output samples similar to those of the target distribution, with at its input a given noise vector.	This is usually achieved by solving 
	\begin{equation}
	\min_g \text{DP}_\sigma\text{SWD}(\X_t, g(z))
	\end{equation}  
	where $z$ {is for instance
	a Gaussian vector.}
	 In practice, we solve this problem using a mini-batching stochastic gradient descent
	strategy, following a similar algorithm than the one for domain adaptation. 
	The main difference is that the private target dataset does not pass through the generator.
	 
	\paragraph{Tracking the privacy loss}
	
	Given that we consider the privacy mechanism within a stochastic gradient descent 	framework, we keep track of the privacy loss through the RDP accountant  proposed by \citet{wang2019subsampled} 	for composing subsampled private mechanisms. Hence,
	we used the PyTorch package \cite{xiang2020} that they made available for estimating the noise standard deviation $\sigma$ given the $(\varepsilon,\delta)$ budget, a number of epoch, a fixed batch size, the number
	of private samples, the dimension $d$  of the distributions to be compared and the
	number $k$ of projections used for $\text{DP}_\sigma\text{SWD}$. Some examples of Gaussian noise standard deviation are reported in Table \ref{tab:parameter} in the appendix.

\section{Related Works}
\label{sec:related}

\subsection{DP Generative Models} Most recent approaches \cite{fan2020survey} that proposed DP generative
models considered it from a GAN perspective and applied DP-SGD \cite{abadi2016deep} for training the model. The main idea for introducing privacy is to appropriately clip the gradient and to add calibrated noise into the model's parameter gradient during training \cite{torkzadehmahani2019dp, neurips20chen, xie2018differentially}. This added noise make those models
even harder to train. Furthermore, since the DP mechanism applies to each single gradient, those approaches do not fully benefit from the amplification induced by subsampling (mini-batching) mechanism
\cite{balle2018neurips}. The work of \citet{neurips20chen} uses
gradient sanitization and achieves privacy amplification by training multiple
discriminators, as in \citep{jordon2018pate}, and sampling on them for adversarial training. While their approach is 
competitive in term of quality of generated data, it is hardly tractable for large
scale dataset, due to the multiple (up to 1000 in their experiments) discriminator trainings.

Instead of considering adversarial training, some DP generative model works have investigated the use
of distance on distributions. \citet{harder2020differentially} proposed random feature based maximum-mean embedding distance for computing distance between empirical distributions. \citet{cao2021differentially} considered the Sinkhorn divergence for
computing distance between true and generated data and used gradient clipping and noise addition for privacy preservation. Their approach is then very similar to
DP-SGD in the privacy mechanism.
Instead, we perturb the Sliced Wasserstein distance by smoothing the distributions to compare. This yields a privacy mechanism that benefits
subsampling amplification, as its sensitivity does not depend on the number of samples, and that preserves its utility as
the smoothed Sliced Wasserstein distance is still a distance.

\subsection{Differential Privacy with Random Projections}
Sliced Wasserstein Distance leverages on Radon transform for mapping
high-dimensional distributions into 1D distributions. This is related
to projection on random directions and the sensitivity analysis
of those projections on unit-norm random vector is key.
The first use of random projection for differential privacy 
has been introduced by \citet{kenthapadi2012privacy}. Their approach
was linked to the distance preserving property of random projections
induced by the Johnson-Lindenstrauss Lemma. As a natural extension, \citet{letien2019differentially} and \citet{gondara2020differentially}
have applied this idea in the context of optimal transport and
classification. The fact that we project on unit-norm random vector, instead of any random vector as in \citet{kenthapadi2012privacy},
requires a novel sensitivity analysis and we show that
this sensitivity scales gracefully with ratio of the number of
projections and dimension of the distributions.

\section{Numerical Experiments}
\label{sec:expe}
In this section, we provide some numerical results showing how our differentially private Sliced Wasserstein Distance works in practice.
The code for reproducing some of the results is
available in \url{https://github.com/arakotom/dp_swd}.

\subsection{Toy Experiment}
The goal of this experiment is to illustrate the behaviour of 
the DP-SWD compared with the  SWD in controlled situations.
We consider the source and target distributions as isotropic Normal distributions of unit variance with added \rev{privacy-inducing} Gaussian noise of different  variances. Both distributions are Gaussian of dimension $5$ and the means of the source and target are respectively $\m_\mu=0$ and $\m_\nu = c 1$ with $c \in [0,1]$.
Figure \ref{fig:toy} presents the evolution of the distances averaged over $5$ random draws of the Gaussian and noise.
 When source and target distributions are different, this experiment shows that DP-SWD follows the same  increasing trend as SWD. This suggests that the order relation between distributions as evaluated using SWD is preserved by DP-SWD, and that the distance DP-SWD is minimized when
 $\mu=\nu$,  which are important features when using DP-SWD as a loss.

\begin{figure}
			\includegraphics[width=4cm]{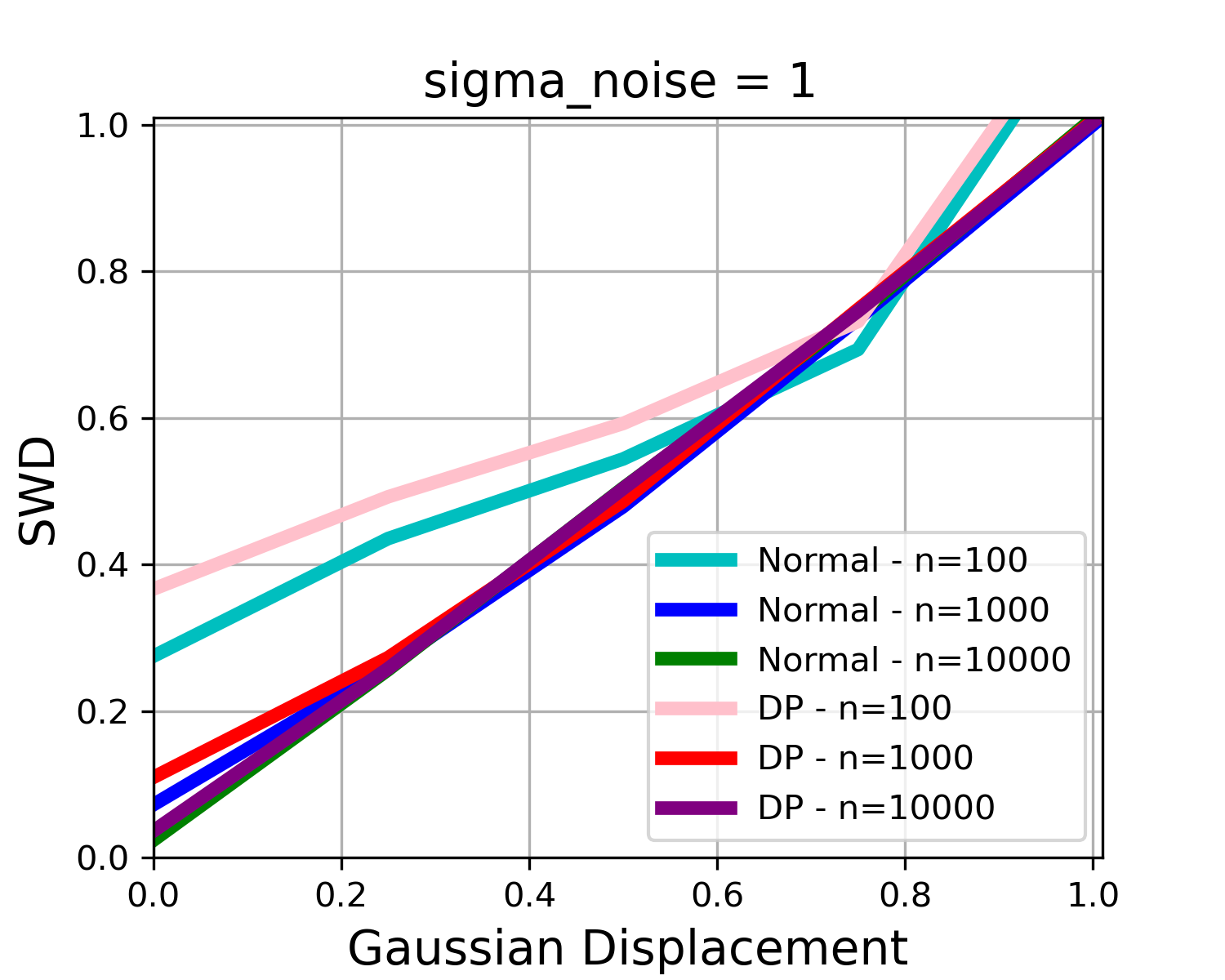}
	\includegraphics[width=4cm]{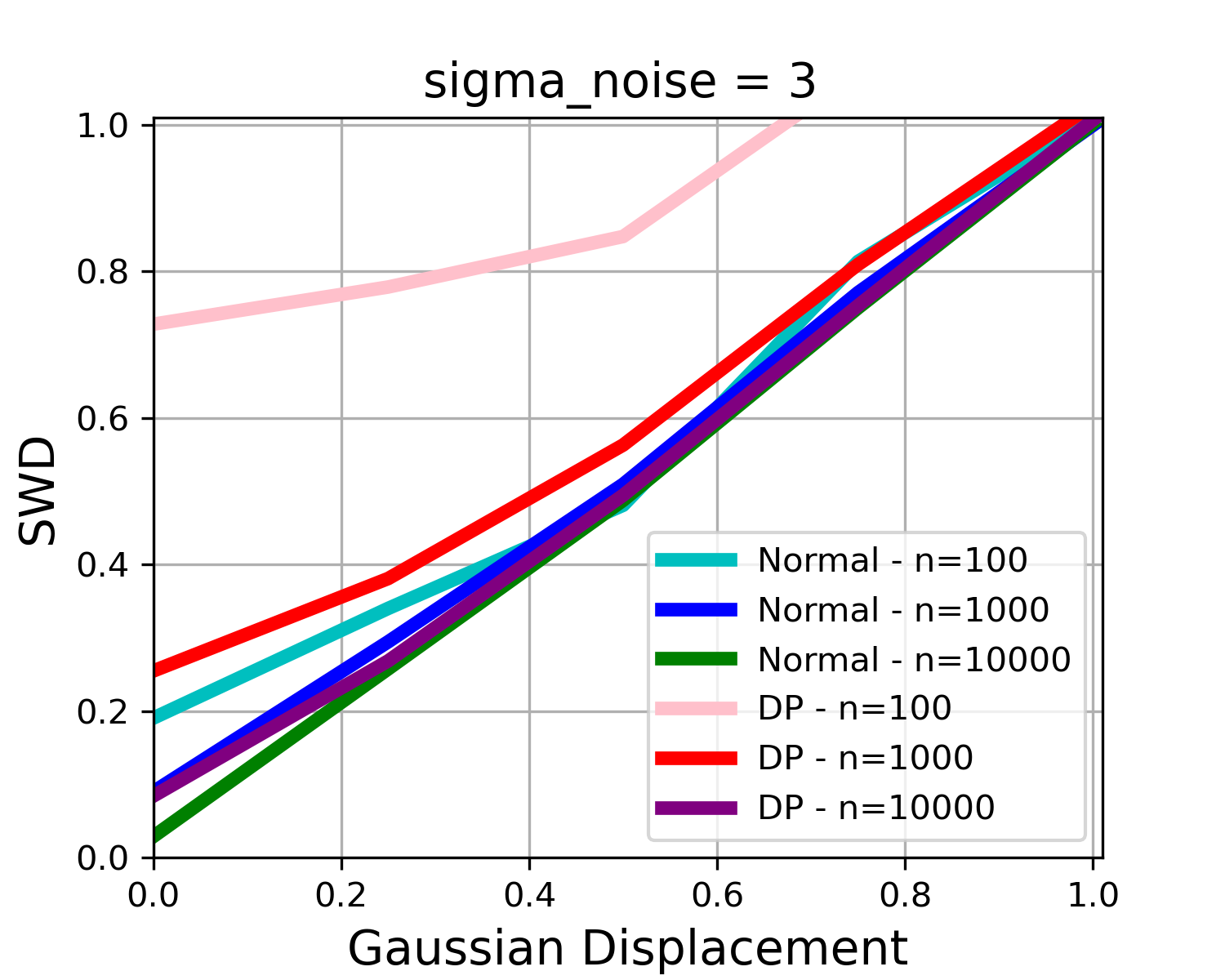}
	\caption{Comparing  SWD and  DP-SWD by measuring the distance between two normal distributions (averaged over $5$ draws of all samples). 	The comparison holds when the distance between the means of the Gaussians increases linearly, for different noise amplitudes of the Gaussian mechanism and 	different number of samples. (left) $\sigma=1$. (right) $\sigma=3$. \label{fig:toy}}
\end{figure}
\subsection{Domain Adaptation}
\begin{figure}
\begin{center}
\includegraphics[width=7cm]{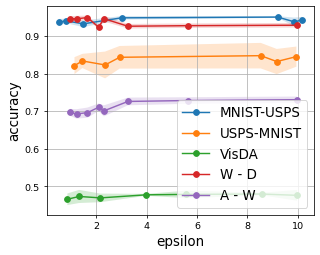}
\end{center}
\vspace{-0.5cm}
\caption{Evolution of the target domain accuracy in UDA  with respect to the $\varepsilon$ parameter for fixed  value of $\delta$, for $3$ different datasets. Sensitivity of DP-SWD has been computed using the Bernstein bound.\label{fig:epsacc}}
\end{figure}
 We conduct experiments  for evaluating our DP-SWD distance in the context of classical unsupervised domain adaptation (UDA) problems such
as handwritten digit recognitions (MNIST/USPS), synthetic to real object data
(VisDA 2017) and Office 31 datasets. Our goal is to analyze how DP-SWD  performs compared with its public counterpart SWD \cite{lee2019sliced}, with one DP deep domain adaptation algorithm DP-DANN that is based on gradient clipping \cite{wang2020} and with the classical non-private DANN algorithm.
 Note that we need not compare with \cite{letien2019differentially} as their algorithm does not learn representation and does not handle large-scale problems, as the OT transport matrix coupling need be computed on the full dataset. For all methods and for each dataset, we used
the same neural network architecture for representation mapping and for
classification. Approaches differ only on how distance between distributions have been computed.
Details of problem configurations as well as model architecture and training procedure
can be found in the appendix. Sensitivity has been computed using the Bernstein bound
of Lemma~\ref{lem:several_beta}.

Table \ref{tab:da} presents the accuracy on the target domain for all methods averaged over $10$ iterations. We remark that our private model outperforms the DP-DANN approach on all problems  except on two difficult ones. Interestingly, our method does not incur a loss of performance despite the private mechanism. This finding is confirmed in Figure~\ref{fig:epsacc} where we plot the performance of the model with respect to the noise level $\sigma$ (and thus the privacy parameter $\varepsilon$). Our model is able to keep accuracy
almost constant for $\varepsilon \in [3,10]$.

\begin{table}[t]
	\begin{center}
	\caption{Table of accuracy on the private target domain for different domain adaptation problems M-U, U-M refers to MNIST-USPS and USPS-MNIST  the first listed data being the source domain. (D,W,A) refers to domains in the Office31 dataset. For all the problems, $\varepsilon=10$ and $\delta$ depends on the 	number of examples in target domain.  $\delta$ has been respectively set
	to $10^{-3}$,$10^{-5}$,$10^{-6}$ for Office31, MNIST-USPS and VisDA.\label{tab:da}}
	\begin{tabular}{l|ll|ll}
\toprule
 & \multicolumn{4}{c}{Methods} \\
 Data & DANN & SWD & DP-DANN & DP-SWD\\\midrule
M-U &  93.9 $\pm$  0  & 95.5 $\pm$  1  & 87.1 $\pm$  2 &  \textbf{94.0$\pm$ 0}  \\
U-M &  86.2 $\pm$  2   & 84.8$\pm$2  & 73.5 $\pm$  2 &	\textbf{83.4$\pm$ 2}	\\
VisDA & 57.4 $\pm$  1  & 53.8$\pm$1 & \textbf{49.0 $\pm$ 1}& 47.0$\pm$ 1 \\ 
D - W & 90.9$\pm$ 1  & 90.7$\pm$ 1  & 88.0$\pm$ 1 &\textbf{90.9$\pm$ 1}  \\
D - A & 58.6$\pm$ 1  & 59.4$\pm$ 1  & \textbf{56.5$\pm$ 1} & 55.2$\pm$ 2  \\
A - W  & 70.4$\pm$ 3  & 74.5$\pm$ 1  & 68.7$\pm$ 1 & \textbf{72.6$\pm$ 1} \\
A - D  & 78.6$\pm$ 2  & 78.5$\pm$ 1  & 73.7$\pm$ 1 &\textbf{79.8$\pm$ 1} \\
W - A  & 54.7$\pm$ 3  & 59.1$\pm$ 0  & 56.0$\pm$ 1 &\textbf{59.0$\pm$ 1} \\
W - D & 91.1$\pm$ 0  & 95.7$\pm$ 1  & 63.4$\pm$ 3 &\textbf{92.6$\pm$ 1} \\
\bottomrule
	\end{tabular}
\end{center}
\end{table}

\begin{figure}[t]
\begin{center}
	\includegraphics[width=6cm]{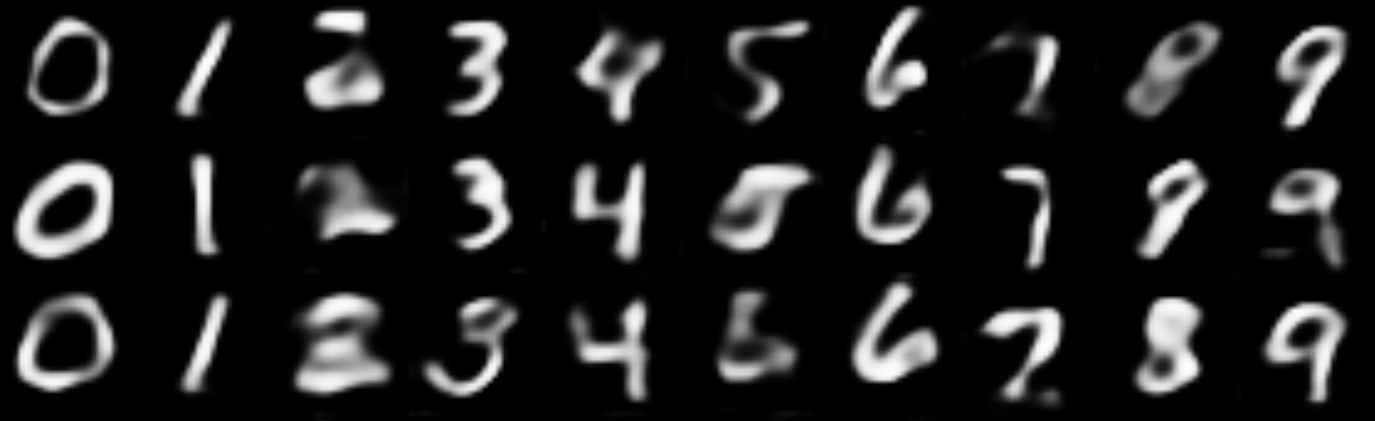}~\\~\\	\includegraphics[width=6cm]{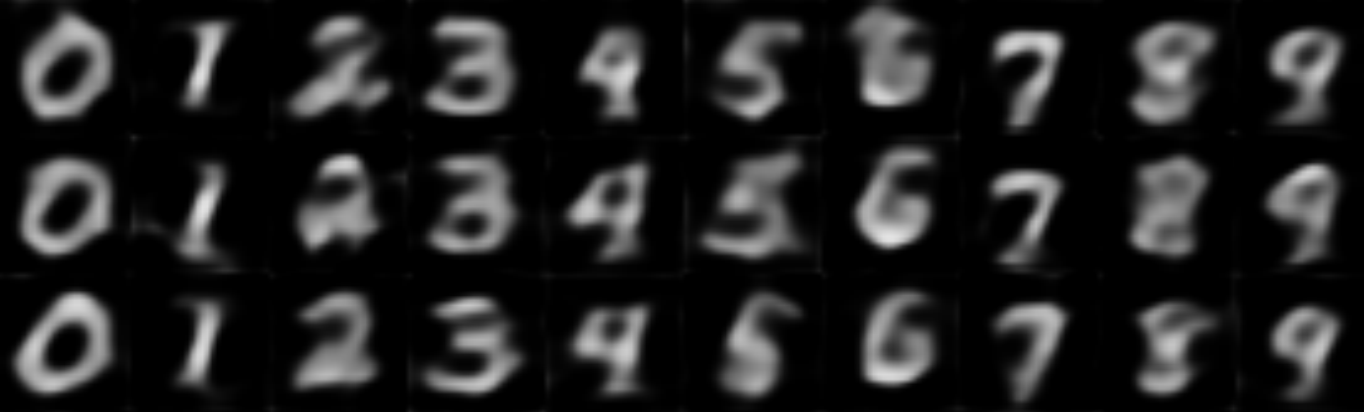}~\\~\\
		\includegraphics[width=6cm]{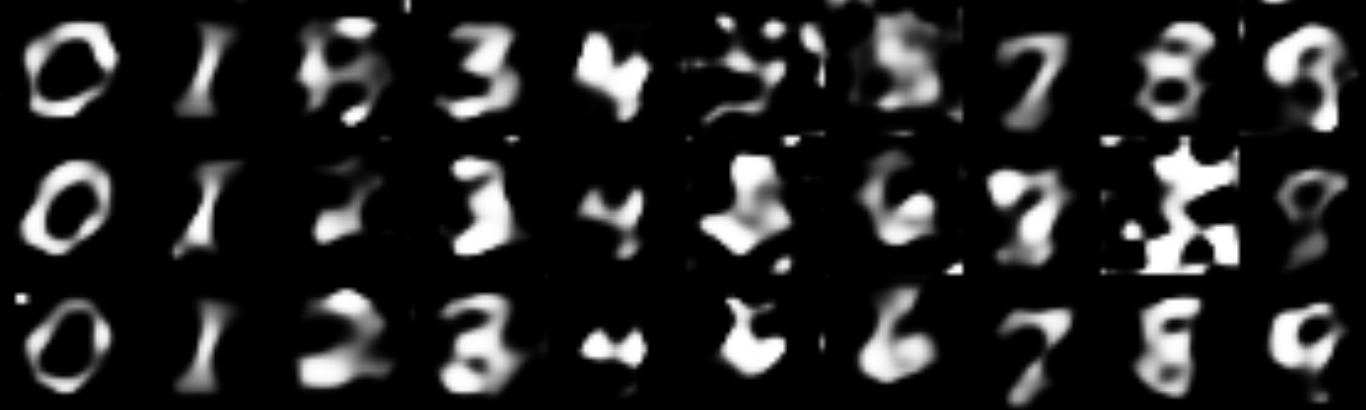}
		\caption{Examples of generate MNIST samples from (top) non-private SWD (middle) DP-SWD-b (bottom) MERF. \label{fig:mnistdigits}}
\end{center}
\end{figure}
\begin{figure}[t]
	\begin{center}
		\includegraphics[width=6cm]{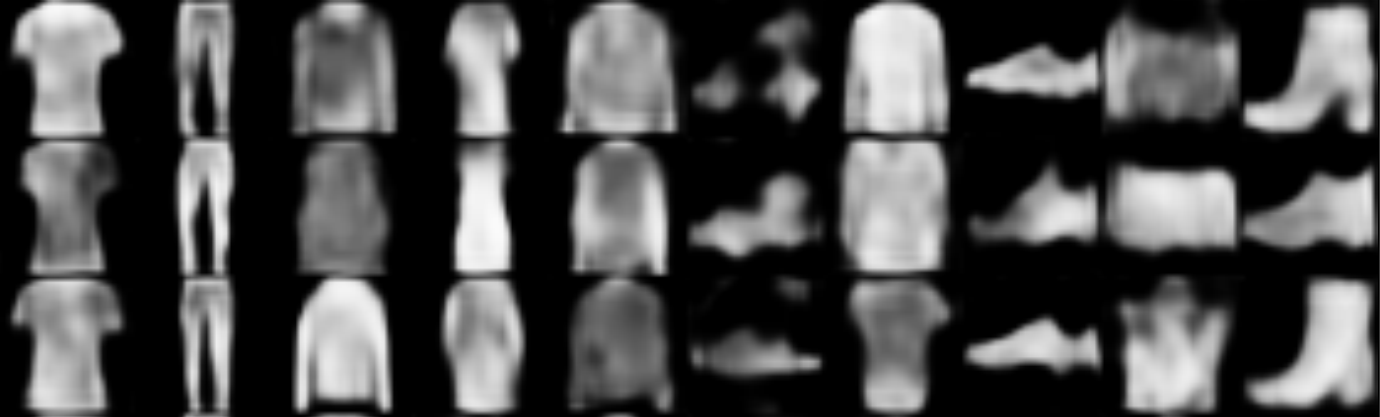}~\\~\\	\includegraphics[width=6cm]{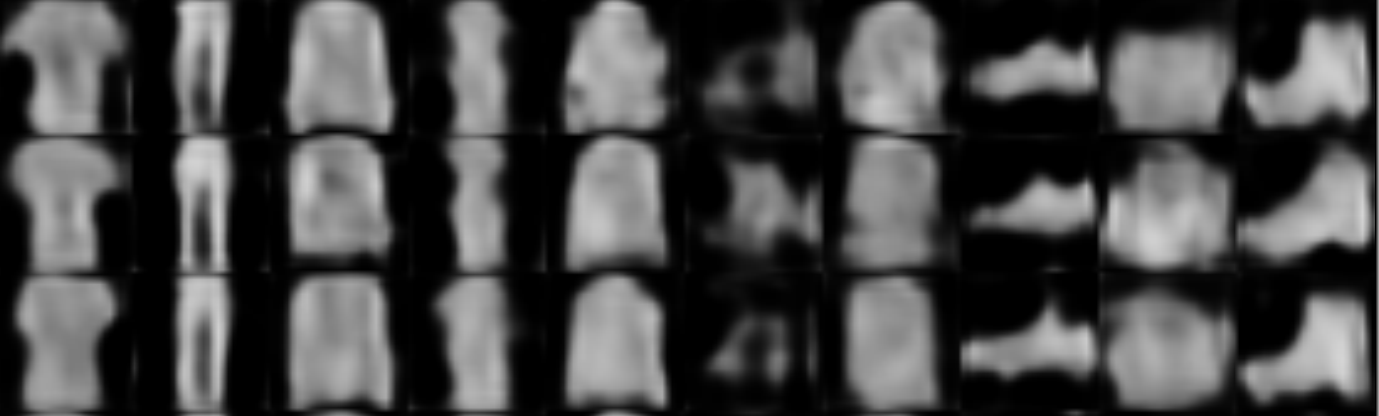}~\\~\\
		\includegraphics[width=6cm]{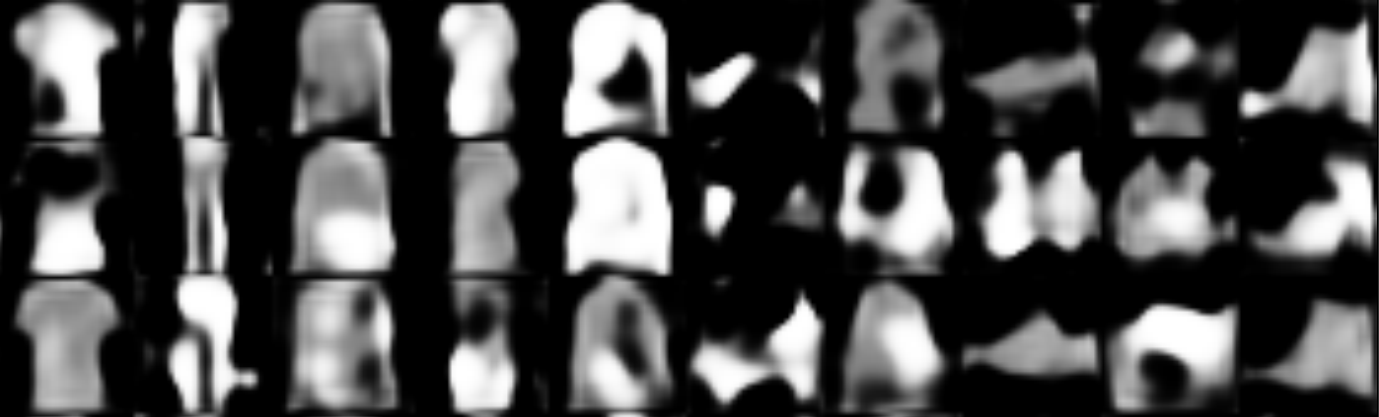}
		\caption{Examples of generate FashionMNIST samples from (top) non-private SWD (middle) DP-SWD-b (bottom) MERF. \label{fig:mnistfashion}}
	\end{center}
\end{figure}

\subsection{Generative Models}

\begin{table}
	\begin{center}
		\caption{Comparison of DP generative models on MNIST and FashionMNIST at privacy level $(\varepsilon,\delta)=(10,10^{-5})$. The downstream task is a $10$-class classification problems using the synthetic generated dataset. We report the accuracy of different classifiers. Results are averaged over $5$ runs of generation. SWD is the non-private version of our generative model.\label{tab:genmnist}}
		\begin{tabular}{llllll}
			\toprule
			& \multicolumn{2}{c}{MNIST} & \multicolumn{2}{c}{FashionMNIST} \\\midrule
			Method &  MLP  & LogReg &  MLP  & LogReg \\\midrule
			SWD &  87  & 82    &  77  & 76    \\ 
			GS-WGAN& 79&79& 65 & 68\\  \midrule
			DP-CGAN& 60&60&  50 & 51\\ 	
			DP-MERF& 76&75&  \textbf{72}& \textbf{71}\\ 
			DP-SWD-c& \textbf{77}  & \textbf{78}    & 67  & 66  \\
			DP-SWD-b & 76  & 77    &  67  & 66  \\\bottomrule
		\end{tabular}
	\end{center}
\end{table}
In the context of generative models, our first task is to generate synthetic samples for MNIST and Fashion MNIST dataset that will be afterwards used for learning a classifier. 
We compare with different gradient-sanitization strategies like DP-CGAN \cite{torkzadehmahani2019dp}, and GS-WGAN \cite{neurips20chen} and a model MERF \cite{harder2020differentially} that uses MMD as distribution distance. We report
results for our DP-SWD using two ways for computing the sensitivity,
by using the CLT bound and the Bernstein bound, respectively noted as
DP-SWD-c and DP-SWD-b.
 All models are compared with the same fixed budget of privacy $(\varepsilon,\delta)=(10,10^{-5})$. 
Our implementation is based on the one of MERF \cite{harder2020differentially}, in which we just plugged our DP-SWD loss in place of the MMD loss. The architecture of ours and MERF's generative model is composed
of few layers of convolutional neural networks and upsampling layers with approximately 180K parameters while the one of GS-WGAN is based on a ResNet with about 4M parameters. MERF's and our models have been trained over $100$ epochs with an Adam optimizer and batch size of $100$. For our DP-SWD we have used $1000$ random projections and the output dimension is the classical $28\times 28= 784$. 

Table \ref{tab:genmnist} reports some quantitative results on the task. We note that MERF and our DP-SWD perform on par on these problems (with a slight advantage for MERF
on FashionMNIST and for DP-SWD on MNIST). Note that our results on MERF are
better than those reported in \cite{neurips20chen}. We also remark that GS-WGAN performs the best at the expense of a model with $20$-fold more parameters and a very expensive training time (few hours just for training the $1000$ discriminators, while our model and MERF's take less than 10min).
Figure \ref{fig:mnistdigits} and \ref{fig:mnistfashion} present some examples of generated samples for MNIST and FashionMNIST. We can note that the samples generated by
DP-SWD present some diversity and are visually more relevant than those of MERF, although they do not lead to  better performance in the classification task. Our samples are a bit blurry compared to the ones generated by the non-private  SWD: this is an expected effect of smoothing.

We also evaluate our DP-SWD distance for training generative models on large RGB datasets
such as the $64 \times 64 \times 3$ CelebA dataset. To the best of our knowledge, no DP generative approaches have been experimented on such a dataset. For instance, the GS-WGAN of \cite{neurips20chen} has been evaluated only on grayscale MNIST-like problems. 
For training the model, we followed the same approach (architecture and optimizer) as the one described in \citet{nguyen2020distributional}. In that work, in order to reduce the dimension of the problems, distributions are compared in a latent space of dimension $d=8192$. We have used $k=2000$ projections which leads to a ratio $\frac{k}{d} <0.25$. Noise variance $\sigma$ and privacy loss over $100$ iterations have been evaluated using the PyTorch  package of \cite{wang2019subsampled} and have been calibrated for $\epsilon=10$ and $\delta=10^{-6}$, since the number of training samples is of the order of $170$K. Details are in the appendix. Figure \ref{fig:celeba} presents some examples of samples generated from our DP-SWD and SWD. We note that in this high-dimensional context, the sensitivity bound plays a key role, as
we get a FID score of 97 vs 158  respectively using CLT bound and Bernstein bound, the former being smaller than the latter.

\begin{figure}
	\centering
	\caption{Images generated on CelebA dataset. From top to bottom. Non-private SWD, DP-SWD with noise calibrated according to Gaussian approximation (CLT bound), DP-SWD with noise calibrated according to the Bernstein bound. The FID score computed over $10000$ generated examples of this three models are respectively $58$, $97$ and $149$. \label{fig:celeba}}~\\
	~\hfill \includegraphics[width=7.2cm]{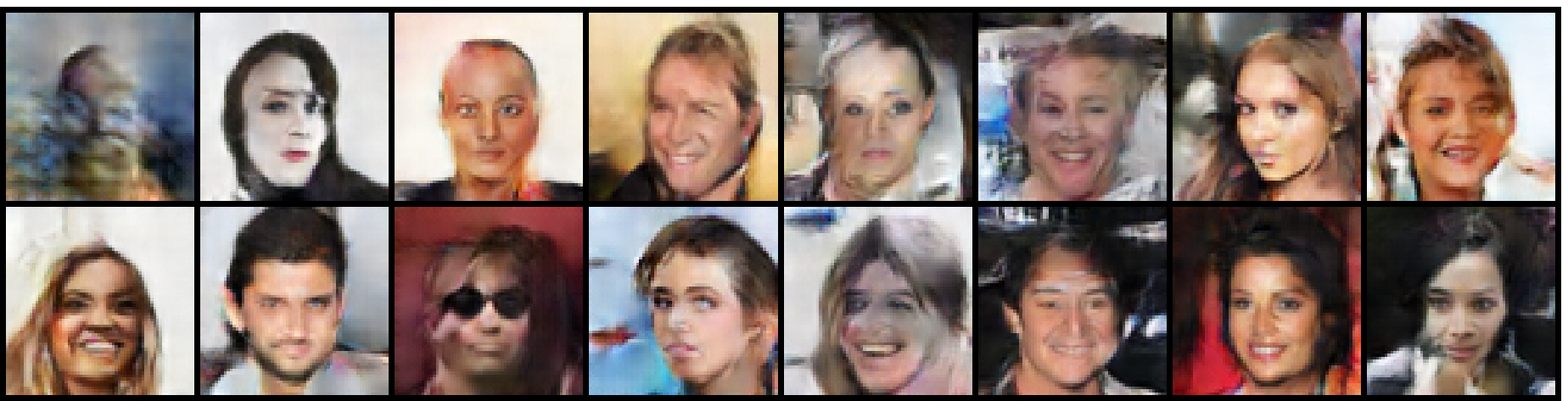} \hfill~\\~\\
		~\hfill
	\includegraphics[width=7.2cm]{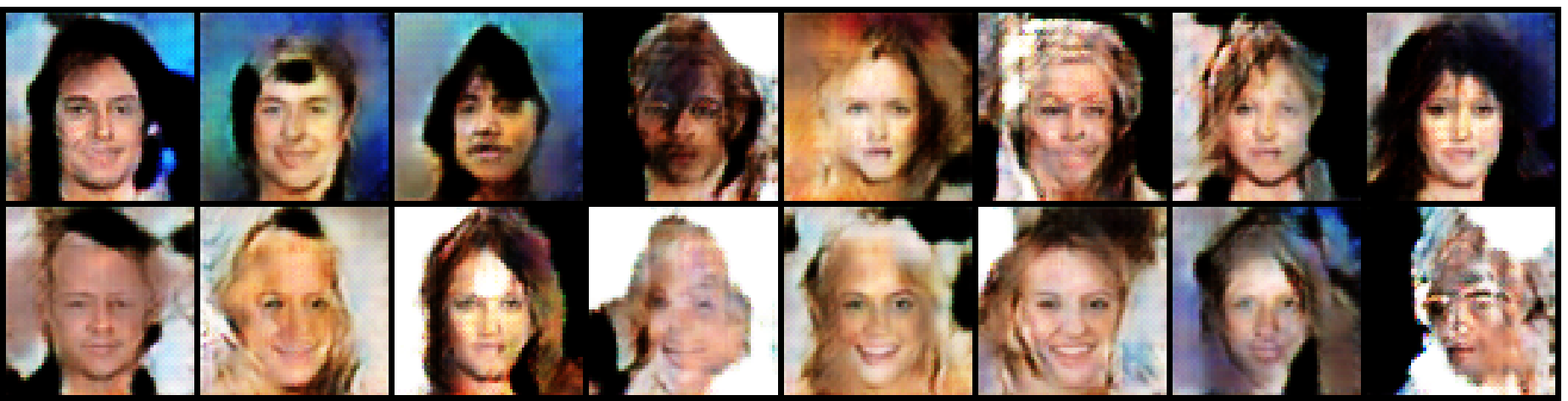} \hfill~\\~\\
		~\hfill
\includegraphics[width=7.2cm]{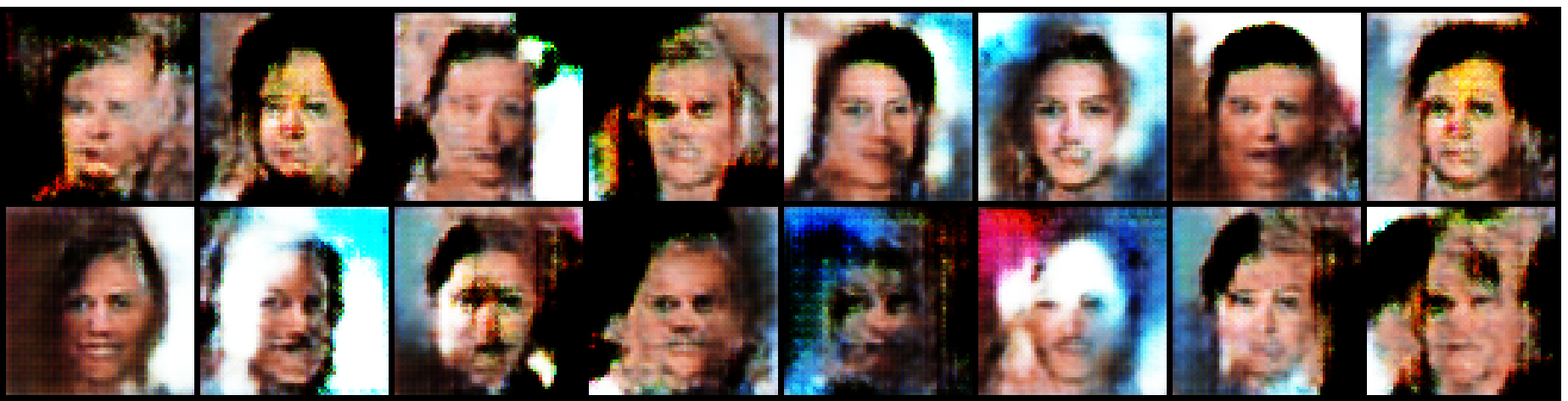} \hfill~
\end{figure}

\section{Conclusion}

This paper presents a differentially private distance on distributions based on the sliced Wasserstein distance. We applied a Gaussian mechanism on the random
projection inherent to SWD and analyzed its properties. We proved that 
a bound (à la Bernstein) on sensitivity of the mechanism as an inverse dependence on the problem dimension and that a Central limit theorem bound, although
approximate, gives a tighter bound. One of our key findings is that the privacy-inducing mechanism we proposed turns the SWD into a smoothed sliced Wasserstein distance, which inherits all the properties of a distance. Hence, our 
privacy-preserving distance can be plugged seamlessly into domain adaptation
or generative model algorithms \rev{to give effective privacy-preserving
learning procedures.}

\providecommand{\CH}{{C.-H}}\providecommand{\JB}{{J.-B}}

\bibliographystyle{icml2021} \clearpage

\begin{thebibliography}{0}
\providecommand{\natexlab}[1]{#1}
\providecommand{\url}[1]{\texttt{#1}}
\expandafter\ifx\csname urlstyle\endcsname\relax
  \providecommand{\doi}[1]{doi: #1}\else
  \providecommand{\doi}{doi: \begingroup \urlstyle{rm}\Url}\fi

\end{thebibliography}


\begin{thebibliography}{34}
\providecommand{\natexlab}[1]{#1}
\providecommand{\url}[1]{\texttt{#1}}
\expandafter\ifx\csname urlstyle\endcsname\relax
  \providecommand{\doi}[1]{doi: #1}\else
  \providecommand{\doi}{doi: \begingroup \urlstyle{rm}\Url}\fi

\bibitem[Abadi et~al.(2016)Abadi, Chu, Goodfellow, McMahan, Mironov, Talwar,
  and Zhang]{abadi2016deep}
Abadi, M., Chu, A., Goodfellow, I., McMahan, H.~B., Mironov, I., Talwar, K.,
  and Zhang, L.
\newblock Deep learning with differential privacy.
\newblock In \emph{Proceedings of the 2016 ACM SIGSAC Conference on Computer
  and Communications Security}, pp.\  308--318, 2016.

\bibitem[Asoodeh et~al.(2020)Asoodeh, Liao, Calmon, Kosut, and
  Sankar]{asoodeh2020three}
Asoodeh, S., Liao, J., Calmon, F.~P., Kosut, O., and Sankar, L.
\newblock Three variants of differential privacy: Lossless conversion and
  applications.
\newblock \emph{arXiv preprint arXiv:2008.06529}, 2020.

\bibitem[Balle \& Wang(2018)Balle and Wang]{balle18a}
Balle, B. and Wang, Y.-X.
\newblock Improving the {G}aussian mechanism for differential privacy:
  Analytical calibration and optimal denoising.
\newblock In Dy, J. and Krause, A. (eds.), \emph{Proceedings of the 35th
  International Conference on Machine Learning}, volume~80 of \emph{Proceedings
  of Machine Learning Research}, pp.\  394--403, Stockholmsmässan, Stockholm
  Sweden, 10--15 Jul 2018. PMLR.
\newblock URL \url{http://proceedings.mlr.press/v80/balle18a.html}.

\bibitem[Balle et~al.(2018)Balle, Barthe, and Gaboardi]{balle2018neurips}
Balle, B., Barthe, G., and Gaboardi, M.
\newblock Privacy amplification by subsampling: Tight analyses via couplings
  and divergences.
\newblock In Bengio, S., Wallach, H., Larochelle, H., Grauman, K.,
  Cesa-Bianchi, N., and Garnett, R. (eds.), \emph{Advances in Neural
  Information Processing Systems}, volume~31, pp.\  6277--6287. Curran
  Associates, Inc., 2018.
\newblock URL
  \url{https://proceedings.neurips.cc/paper/2018/file/3b5020bb891119b9f5130f1fea9bd773-Paper.pdf}.

\bibitem[Bonneel et~al.(2015)Bonneel, Rabin, Peyr{\'e}, and
  Pfister]{bonneel2015sliced}
Bonneel, N., Rabin, J., Peyr{\'e}, G., and Pfister, H.
\newblock Sliced and radon wasserstein barycenters of measures.
\newblock \emph{Journal of Mathematical Imaging and Vision}, 51\penalty0
  (1):\penalty0 22--45, 2015.

\bibitem[Bonnotte(2013)]{bonnotte2013unidimensional}
Bonnotte, N.
\newblock \emph{Unidimensional and evolution methods for optimal
  transportation}.
\newblock PhD thesis, Paris 11, 2013.

\bibitem[Cao et~al.(2021)Cao, Bie, Kreis, and Fidler]{cao2021differentially}
Cao, T., Bie, A., Kreis, K., and Fidler, S.
\newblock Differentially private generative models through optimal transport,
  2021.
\newblock URL \url{https://openreview.net/forum?id=zgMPc_48Zb}.

\bibitem[Chaudhuri et~al.(2011)Chaudhuri, Monteleoni, and
  Sarwate]{chaudhuri2011differentially}
Chaudhuri, K., Monteleoni, C., and Sarwate, A.~D.
\newblock Differentially private empirical risk minimization.
\newblock \emph{Journal of Machine Learning Research}, 12\penalty0 (3), 2011.

\bibitem[Chen et~al.(2020)Chen, Orekondy, and Fritz]{neurips20chen}
Chen, D., Orekondy, T., and Fritz, M.
\newblock Gs-wgan: A gradient-sanitized approach for learning differentially
  private generators.
\newblock In \emph{Neural Information Processing Systems (NeurIPS)}, 2020.

\bibitem[Deshpande et~al.(2018)Deshpande, Zhang, and
  Schwing]{deshpande2018generative}
Deshpande, I., Zhang, Z., and Schwing, A.~G.
\newblock Generative modeling using the sliced wasserstein distance.
\newblock In \emph{Proceedings of the IEEE conference on computer vision and
  pattern recognition}, pp.\  3483--3491, 2018.

\bibitem[Dwork(2008)]{dwork2008differential}
Dwork, C.
\newblock Differential privacy: A survey of results.
\newblock In \emph{International conference on theory and applications of
  models of computation}, pp.\  1--19. Springer, 2008.

\bibitem[Dwork et~al.(2014)Dwork, Roth, et~al.]{dwork2014algorithmic}
Dwork, C., Roth, A., et~al.
\newblock The algorithmic foundations of differential privacy.
\newblock \emph{Foundations and Trends in Theoretical Computer Science},
  9\penalty0 (3-4):\penalty0 211--407, 2014.

\bibitem[Fan(2020)]{fan2020survey}
Fan, L.
\newblock A survey of differentially private generative adversarial networks.
\newblock In \emph{The AAAI Workshop on Privacy-Preserving Artificial
  Intelligence}, 2020.

\bibitem[Goldfeld \& Greenewald(2020)Goldfeld and
  Greenewald]{goldfeld2020gaussian}
Goldfeld, Z. and Greenewald, K.
\newblock Gaussian-smoothed optimal transport: Metric structure and statistical
  efficiency.
\newblock In \emph{International Conference on Artificial Intelligence and
  Statistics}, pp.\  3327--3337. PMLR, 2020.

\bibitem[Goldfeld et~al.(2020)Goldfeld, Greenewald, and
  Kato]{goldfeld2020asymptotic}
Goldfeld, Z., Greenewald, K., and Kato, K.
\newblock Asymptotic guarantees for generative modeling based on the smooth
  wasserstein distance.
\newblock \emph{Advances in Neural Information Processing Systems}, 33, 2020.

\bibitem[Gondara \& Wang(2020)Gondara and Wang]{gondara2020differentially}
Gondara, L. and Wang, K.
\newblock Differentially private small dataset release using random
  projections.
\newblock In \emph{Conference on Uncertainty in Artificial Intelligence}, pp.\
  639--648. PMLR, 2020.

\bibitem[Harder et~al.(2020)Harder, Adamczewski, and
  Park]{harder2020differentially}
Harder, F., Adamczewski, K., and Park, M.
\newblock Differentially private mean embeddings with random features (dp-merf)
  for simple \& practical synthetic data generation.
\newblock \emph{arXiv preprint arXiv:2002.11603}, 2020.

\bibitem[Jordon et~al.(2018)Jordon, Yoon, and Van Der~Schaar]{jordon2018pate}
Jordon, J., Yoon, J., and Van Der~Schaar, M.
\newblock Pate-gan: Generating synthetic data with differential privacy
  guarantees.
\newblock In \emph{International Conference on Learning Representations}, 2018.

\bibitem[Kenthapadi et~al.(2013)Kenthapadi, Korolova, Mironov, and
  Mishra]{kenthapadi2012privacy}
Kenthapadi, K., Korolova, A., Mironov, I., and Mishra, N.
\newblock Privacy via the johnson-lindenstrauss transform.
\newblock \emph{Journal of Privacy and Confidentiality}, 5\penalty0 (1), 2013.

\bibitem[Lee et~al.(2019)Lee, Batra, Baig, and Ulbricht]{lee2019sliced}
Lee, C.-Y., Batra, T., Baig, M.~H., and Ulbricht, D.
\newblock Sliced wasserstein discrepancy for unsupervised domain adaptation.
\newblock In \emph{Proceedings of the IEEE/CVF Conference on Computer Vision
  and Pattern Recognition}, pp.\  10285--10295, 2019.

\bibitem[LeTien et~al.(2019)LeTien, Habrard, and
  Sebban]{letien2019differentially}
LeTien, N., Habrard, A., and Sebban, M.
\newblock Differentially private optimal transport: Application to domain
  adaptation.
\newblock In \emph{IJCAI}, pp.\  2852--2858, 2019.

\bibitem[{Mironov}(2017)]{mironov2017}
{Mironov}, I.
\newblock Rényi differential privacy.
\newblock In \emph{2017 IEEE 30th Computer Security Foundations Symposium
  (CSF)}, pp.\  263--275, 2017.
\newblock \doi{10.1109/CSF.2017.11}.

\bibitem[Nguyen et~al.(2020)Nguyen, Ho, Pham, and
  Bui]{nguyen2020distributional}
Nguyen, K., Ho, N., Pham, T., and Bui, H.
\newblock Distributional sliced-wasserstein and applications to generative
  modeling.
\newblock \emph{arXiv preprint arXiv:2002.07367}, 2020.

\bibitem[Nguyen et~al.(2021)Nguyen, Ho, Pham, and
  Bui]{nguyen2021distributional}
Nguyen, K., Ho, N., Pham, T., and Bui, H.
\newblock Distributional sliced-wasserstein and applications to generative
  modeling.
\newblock In \emph{International Conference on Learning Representations}, 2021.
\newblock URL \url{https://openreview.net/forum?id=QYjO70ACDK}.

\bibitem[Nietert et~al.(2021)Nietert, Goldfeld, and Kato]{nietert2021smooth}
Nietert, S., Goldfeld, Z., and Kato, K.
\newblock From smooth wasserstein distance to dual sobolev norm: Empirical
  approximation and statistical applications.
\newblock \emph{arXiv preprint arXiv:2101.04039}, 2021.

\bibitem[Rabin et~al.(2011)Rabin, Peyr{\'e}, Delon, and
  Bernot]{rabin2011wasserstein}
Rabin, J., Peyr{\'e}, G., Delon, J., and Bernot, M.
\newblock Wasserstein barycenter and its application to texture mixing.
\newblock In \emph{International Conference on Scale Space and Variational
  Methods in Computer Vision}, pp.\  435--446. Springer, 2011.

\bibitem[Rubinstein et~al.(2009)Rubinstein, Bartlett, Huang, and
  Taft]{rubinstein2009learning}
Rubinstein, B.~I., Bartlett, P.~L., Huang, L., and Taft, N.
\newblock Learning in a large function space: Privacy-preserving mechanisms for
  svm learning.
\newblock \emph{arXiv preprint arXiv:0911.5708}, 2009.

\bibitem[Rényi(1961)]{renyi1961}
Rényi, A.
\newblock On measures of entropy and information.
\newblock In \emph{Proceedings of the Fourth Berkeley Symposium on Mathematical
  Statistics and Probability, Volume 1: Contributions to the Theory of
  Statistics}, pp.\  547--561, Berkeley, Calif., 1961. University of California
  Press.
\newblock URL \url{https://projecteuclid.org/euclid.bsmsp/1200512181}.

\bibitem[Torkzadehmahani et~al.(2019)Torkzadehmahani, Kairouz, and
  Paten]{torkzadehmahani2019dp}
Torkzadehmahani, R., Kairouz, P., and Paten, B.
\newblock Dp-cgan: Differentially private synthetic data and label generation.
\newblock In \emph{Proceedings of the IEEE/CVF Conference on Computer Vision
  and Pattern Recognition Workshops}, pp.\  0--0, 2019.

\bibitem[Tu()]{tudifferentially}
Tu, S.
\newblock Differentially private random projections.

\bibitem[{Wang} et~al.(2020){Wang}, {Li}, {Zou}, {Zhao}, and {Wang}]{wang2020}
{Wang}, Q., {Li}, Z., {Zou}, Q., {Zhao}, L., and {Wang}, S.
\newblock Deep domain adaptation with differential privacy.
\newblock \emph{IEEE Transactions on Information Forensics and Security},
  15:\penalty0 3093--3106, 2020.
\newblock \doi{10.1109/TIFS.2020.2983254}.

\bibitem[Wang et~al.(2019)Wang, Balle, and Kasiviswanathan]{wang2019subsampled}
Wang, Y.-X., Balle, B., and Kasiviswanathan, S.~P.
\newblock Subsampled r{\'e}nyi differential privacy and analytical moments
  accountant.
\newblock In \emph{The 22nd International Conference on Artificial Intelligence
  and Statistics}, pp.\  1226--1235. PMLR, 2019.

\bibitem[Xiang(2020)]{xiang2020}
Xiang, Y.
\newblock Autodp : Automating differential privacy computation, 2020.
\newblock URL \url{https://github.com/yuxiangw/autodp}.

\bibitem[Xie et~al.(2018)Xie, Lin, Wang, Wang, and Zhou]{xie2018differentially}
Xie, L., Lin, K., Wang, S., Wang, F., and Zhou, J.
\newblock Differentially private generative adversarial network.
\newblock \emph{arXiv preprint arXiv:1802.06739}, 2018.

\end{thebibliography}
\newpage
\section*{\centering Supplementary material \\
Differentially Private Sliced Wasserstein Distance}

\section{Appendix}

\setcounter{lemma}{0}
\subsection{Lemma 1 and its proof}
\begin{lemma} 
		Assume that $\z\in\R^d$ is a unit-norm vector and $\u\in\R^d$ a vector where each entry is drawn independently from $\mathcal{N}(0,\sigma_u^2)$. Then
					$$
	Y\doteq\Big(\z^\top \frac{\u}{\|\u\|_2}\Big)^2 \sim B(1/2, (d-1)/2)
	$$
	where $B(\alpha,\beta)$ is the beta distribution of parameters
	$\alpha,\beta$. \end{lemma}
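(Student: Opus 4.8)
The plan is to exploit the rotational invariance of the isotropic Gaussian vector $\u$. Since $\u\sim\mathcal{N}(0,\sigma_u^2 I_d)$, the normalized vector $\u/\|\u\|_2$ is uniformly distributed on $\mathbb{S}^{d-1}$, and in particular its law is invariant under orthogonal transformations. Concretely, I would choose an orthogonal matrix $Q$ with $Q\z=\e_1$ (possible because $\z$ is unit-norm) and observe that $Q\u$ has the same distribution as $\u$. Writing $\tilde\u=Q\u$, this gives
$$\z^\top \frac{\u}{\|\u\|_2} = (Q\z)^\top \frac{Q\u}{\|Q\u\|_2} = \e_1^\top \frac{\tilde\u}{\|\tilde\u\|_2} = \frac{\tilde u_1}{\|\tilde\u\|_2},$$
so that $Y \stackrel{d}{=} \tilde u_1^2/\|\tilde\u\|_2^2$. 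This removes the dependence on the particular direction $\z$ and reduces everything to the canonical case $\z=\e_1$.

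Next I would decompose the denominator as $\|\tilde\u\|_2^2 = \tilde u_1^2 + \sum_{i=2}^d \tilde u_i^2$ and write
$$Y \stackrel{d}{=} \frac{\tilde u_1^2}{\tilde u_1^2 + \sum_{i=2}^d \tilde u_i^2}.$$
Because the entries $\tilde u_i$ are independent $\mathcal{N}(0,\sigma_u^2)$, the numerator term $\tilde u_1^2$ is a scaled $\chi^2_1$ variable, i.e.\ a Gamma$(1/2,\,2\sigma_u^2)$ random variable, while $\sum_{i=2}^d \tilde u_i^2$ is an independent scaled $\chi^2_{d-1}$ variable, i.e.\ Gamma$((d-1)/2,\,2\sigma_u^2)$. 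The key point is that both share the common scale $2\sigma_u^2$; note also that the variance $\sigma_u^2$ cancels in the ratio, consistent with $Y$ being scale-free.

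Finally I would invoke the standard Gamma--Beta identity: if $G_1\sim\text{Gamma}(a,\theta)$ and $G_2\sim\text{Gamma}(b,\theta)$ are independent with the same scale $\theta$, then $G_1/(G_1+G_2)\sim B(a,b)$. Applying this with $a=1/2$ and $b=(d-1)/2$ yields $Y\sim B(1/2,(d-1)/2)$, as claimed. I do not expect a serious obstacle here: the only steps requiring care are justifying the rotational-invariance reduction rigorously and matching the Gamma shape and scale parameters correctly (the shared scale being exactly what makes the Beta identity applicable). An alternative route would compute the density of $Y$ directly from the surface measure on $\mathbb{S}^{d-1}$, but the Gamma--Beta argument is cleaner and sidesteps any Jacobian bookkeeping.
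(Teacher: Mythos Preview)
Your proposal is correct and follows essentially the same approach as the paper: reduce to the case $\z=\e_1$ via rotational invariance of the isotropic Gaussian, then recognize $u_1^2/\sum_i u_i^2$ as a ratio of independent Gamma variables with common scale and apply the Gamma--Beta identity. Your treatment is arguably a bit more explicit (naming the orthogonal matrix $Q$ and checking the shared scale parameter), but the underlying argument is identical.
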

\begin{proof}
At first, consider a vector of unit-length in $\R^d$, say $\e_1$, that can
be completed to an orthogonal basis. A change of basis from the canonical one does not change the length of a vector as the transformation is orthogonal. Thus the distribution
of 
$$
 \frac{(\e_1^\top\u)^2}{\|\u\|_2^2} = \frac{(\e_1^\top\u)^2}{\sum_i^d u_i^2}
$$
does not depend on $\e_1$. $\e_1$ can be either the vector $(1,0,\cdots,0)$ in $\R^d$ or $ \z$ (as $\z$ is an unit-norm vector). However, for simplicity, let us consider $\e_1$ as $(1,0,\cdots,0)$, we thus have
 $$
 \frac{(\e_1^\top\u)^2}{\|\u\|_2^2} = \frac{u_1^2}{\sum_i^d u_i^2}
 $$
 where the $u_i$ are iid from a normal distribution of standard deviation
 $\sigma_u$. Hence, because $u_1$ and the $\{u_i\}_{i=2}^d$ are independent,
the above distribution is equal to the one of
$$\frac{\sigma_u^2V}{\sigma_u^2V+\sigma_u^2 Z}$$
where $V = u_1^2/\sigma_u^2 \sim \Gamma(1/2)$ (and is a chi-square distribution) ) and $Z = (\sum_{i=2}^{d} u_i^2)/\sigma_u^2 \sim \Gamma((d-1)/2)$ and thus $V/(V+Z)$ follows a beta distribution 
$B(1/2,(d-1)/2)$. And the fact that $\z$ is also a unit-norm vector concludes the proof.

\end{proof}
A simulation of the random $Y$ and resulting histogram is
depicted in Figure \ref{fig:simu}.
\begin{remark} From the properties of the beta distribution, expectation and variances are given by
	$$\expectation Y=\frac{1}{d}\quad\text{and}\quad \variance Y=\frac{2(d-1)}{d^2(d+2)}$$	
\end{remark}
\begin{remark}
Note that if $\z$ is not of unit-length then $Y$ follows $\|\z\| B(1/2,(d-1)/2)$
\end{remark}

\begin{figure}
	\centering
	\includegraphics[width=5cm]{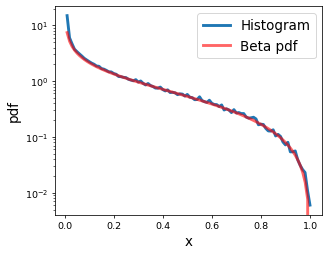}
	\caption{Estimation of the pdf of $Y$ in Lemma 1, for a fixed $\z$, based on a histogram
		over $100000$ samples of $\u$. Here, we have $d=5$.
	\label{fig:simu}}
\end{figure}

\subsection{Lemma 2 and its proof}

\begin{lemma}
		Suppose again that $\z$ is unit norm. With probability at least $1-\delta$, we have 
	\begin{align}
	\left\|\X \U - \X^\prime \U \right\|_F^2\leq w(k,\delta),
	\intertext{with}
	w(k,\delta)\doteq\frac{k}{d}+\frac{2}{3}\ln\frac{1}{\delta} + \frac{2}{d}\sqrt{k\frac{d-1}{d+2}\ln\frac{1}{\delta}}
	\end{align}
\end{lemma}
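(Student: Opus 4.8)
The plan is to reduce the Frobenius norm to a sum of $k$ independent Beta random variables and then apply a Bernstein concentration inequality. First I would exploit the fact that $\X$ and $\X^\prime$ coincide on every row except the $i$-th: the matrix $\X\U-\X^\prime\U$ then has all rows equal to zero apart from its $i$-th row, which equals $\z^\top\U$ (recall $\z=(\X_{i,:}-\X_{i,:}^\prime)^\top$). Consequently
$$\left\|\X\U-\X^\prime\U\right\|_F^2=\left\|\z^\top\U\right\|_2^2=\sum_{j=1}^k\big(\z^\top\u_j\big)^2,$$
where $\u_1,\dots,\u_k$ are the columns of $\U$, drawn i.i.d. uniformly on $\mathbb{S}^{d-1}$.

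Next I would invoke Lemma~1: since $\z$ is unit-norm, each summand $Y_j\doteq(\z^\top\u_j)^2$ follows $B(1/2,(d-1)/2)$, and the $Y_j$ are independent because the $\u_j$ are. Using the moments recorded in the remark following Lemma~1, $\expectation Y_j=1/d$ and $\variance Y_j=\frac{2(d-1)}{d^2(d+2)}$, so the sum $S\doteq\sum_{j=1}^k Y_j$ has mean $k/d$ and total variance $v\doteq\frac{2k(d-1)}{d^2(d+2)}$. A convenient observation is that each $Y_j$ lies in $[0,1]$ by Cauchy--Schwarz (both $\z$ and $\u_j$ are unit vectors), so the centered variables are bounded by $1$ almost surely.

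With these ingredients in place I would apply the one-sided Bernstein inequality for a sum of independent, almost-surely bounded variables: for all $t>0$,
$$\proba\!\left(S-\expectation S\geq t\right)\leq\exp\!\left(-\frac{t^2}{2(v+t/3)}\right),$$
using the bound $M=1$ on the summands. Setting the right-hand side to $\delta$ and using the standard upper bound on the resulting quadratic's root, $t\leq\sqrt{2v\ln(1/\delta)}+\tfrac{2}{3}\ln(1/\delta)$, yields that with probability at least $1-\delta$,
$$S\leq\frac{k}{d}+\frac{2}{3}\ln\frac{1}{\delta}+\sqrt{\frac{4k(d-1)}{d^2(d+2)}\ln\frac{1}{\delta}}.$$
Simplifying the last term to $\frac{2}{d}\sqrt{k\frac{d-1}{d+2}\ln\frac{1}{\delta}}$ gives exactly $w(k,\delta)$.

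The computation is essentially mechanical once the reduction is made; the only point demanding care is tracking the Bernstein constants. I would verify the boundedness constant $M=1$ for the centered summands and correctly invert the exponential tail into the additive ``sub-Gaussian plus sub-exponential'' form, so that the variance term produces the factor $\frac{2}{d}\sqrt{k(d-1)/(d+2)}$ rather than a constant off by a factor of $2$. In particular I would be careful to plug in the \emph{total} variance $v=\frac{2k(d-1)}{d^2(d+2)}$ (not the single-variable variance) and to keep the $2/3$ coefficient on the $\ln(1/\delta)$ term consistent with the chosen Bernstein normalization.
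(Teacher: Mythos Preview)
Your proposal is correct and follows essentially the same approach as the paper: reduce $\|\X\U-\X^\prime\U\|_F^2$ to a sum of $k$ i.i.d.\ $B(1/2,(d-1)/2)$ variables via Lemma~1, then apply the one-sided Bernstein inequality with mean $k/d$, total variance $\tfrac{2k(d-1)}{d^2(d+2)}$, and bound $M=1$, and invert the tail to obtain the additive $\sqrt{2v\ln(1/\delta)}+\tfrac{2}{3}\ln(1/\delta)$ deviation. Your write-up is in fact slightly more careful than the paper's in making explicit the Cauchy--Schwarz bound $Y_j\in[0,1]$ and in distinguishing the total variance from the per-summand variance.
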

\begin{proof}
			
	First observe that:
	\begin{align*}
	H&\doteq\left\|\X \U - \X^\prime \U \right\|_F^2=\left\|(\X- \X^\prime)\U \right\|_F^2\\
	& = \left\|\z^{\top}\U\right\|_2^2=\sum_{j=1}^k\left(\z^{\top}\frac{\u_j}{\|\u_j\|}\right)^2\\
	& = \sum_{j=1}^k Y_j,\text{ where } Y_j\doteq \left(\z^{\top}\frac{\u_j}{\|\u_j\|}\right)^2.
	\end{align*}
	Therefore, $H$ is the sum of $k$ iid $B(1/2, (d-1)/2)$-distributed random variables.
	
	It is thus possible to use any inequality bounding $H$ from its mean to state a highly probable
	interval for $H$. We here use 	 inequality, that is tighter than Hoeffding inequality,
	whenever some knowledge is provided on the variance of the random variables considered. Recall
	that it states that if $Y_1,\ldots,Y_k$ and zero-mean independent RV with such that $|Y_i|\leq M$ a.s:
	$$\P\left(\sum_{j=1}^k Y_j\geq t\right)\leq \exp\left(-\frac{t^2}{2\sum_{j=1}^k\E Y_j^2+\frac{2}{3}Mt}\right)$$
	For $H$, we have $$\E H = \sum_{j=1}^k\E
	\left(\z^{\top}\frac{\u_j}{\|\u_j\|}\right)^2=\sum_{j=1}^k\frac{1}{d}=\frac{k}{d}$$
	and Bernstein's inequality gives
	$$\P\left(H\geq \frac{k}{d}+t\right)\leq \exp\left(-\frac{t^2}{2kv_d+\frac{2}{3}t}\right),$$
	where $$v_d=\frac{2(d-1)}{d^2(d+2)}$$ is the variance of each $(\z^{\top}\u_j/\|\u_j\|)^2$ beta distributed variable.
	Making the right hand side be equal to $\delta$, solving the second-order equation for $t$ give 
	that, with probability at least $1-\delta$
	$$H\leq \frac{k}{d}+\frac{2}{3}\ln\frac{1}{\delta} + \sqrt{2kv_d\ln\frac{1}{\delta}}$$
	
	The proof follows directly from Lemma~\ref{lem:single_beta} and the fact 
\end{proof}

From the above lemma, we have a probabilistic bound on the sensitivity of the random direction projection and SWD . The lower this bound is the better it is, as less noise needed
for achieving a certain $(\varepsilon,\delta)$-DP. Interestingly, the first and last terms in this bound have an inverse dependency on the \textbf{dimension}.
Hence, if the dimension of space in which the DP-SWD has to be chosen, for instance, when considering latent representation,  
a practical compromise has to be performed between a smaller bound and
a better estimation. Also remark that if $k<d$, the bound is mostly dominated by the term $\log(1/\delta)$.  Compared to other random-projection bounds \cite{tudifferentially} which have a linear  dependency in $k$. For our bound,  dimension also help in mitigating this dependency.

\subsection{Proof of the Central Limit Theorem based bound}

\begin{proof}{Proof with the Central Limit Theorem}
	According to the Central Limit Theorem --- whenever $k>30$ is the accepted rule of thumb ---
	we may consider that
	$$\frac{H}{k}\sim\normal\left(\frac{1}{d},\frac{v_d}{k}\right)$$
	i.e.
	$$\left(\frac{H}{k}-\frac{1}{d}\right)\sqrt{\frac{k}{v_d}}\sim \normal(0,1)$$
	and thus
	$$\P\left(\left(\frac{H}{k}-\frac{1}{d}\right)\sqrt{\frac{k}{v_d}}\geq t\right)\leq 1 - \Phi(t)$$
	Setting $1 - \Phi(t)=\delta$ gives $t=\Phi^{-1}(1-\delta)\doteq z_{1-\delta}$, and
	thus with probability at least $1-\delta$
	\begin{align*}
	H&\leq \frac{k}{d} + z_{1-\delta}\sqrt{k{v_d}}\\
	&=\frac{k}{d} + \frac{z_{1-\delta}}{d}\sqrt{\frac{2k(d-1)}{d+2}}
	\end{align*}
\end{proof}

\subsection{Proof of Property 2.}
\setcounter{property}{1}
\begin{property}
	$\text{DP}_\sigma\text{SWD}_q^q(\mu,\nu)$ is symmetric and satisfies the triangle inequality for $q = 1$. 
\end{property}
\begin{proof} The symmetry trivially comes from the definition of 	$\text{DP}_\sigma\text{SWD}_q^q(\mu,\nu)$ that  is 
	$$
	\text{DP}_\sigma\text{SWD}_q^q(\mu,\nu) =
	\mathbf{E}_{ \u \sim \mathbb{S}^{d-1}}  W_q^q(\mathcal{R}_\u \mu * \mathcal{N}_\sigma,\mathcal{R}_\u \nu * \mathcal{N}_\sigma ) 
	$$
	and the fact the Wasserstein distance is itself symmetric.

	Regarding the triangle inequality for $q \geq 1$, our result is based on
	a very recent result showing that the smoothed Wasserstein for $q\geq 1$ is also a metric \cite{nietert2021smooth} (Our proof is indeed valid for $q \geq 1$, as this recent result generalizes the one of \cite{goldfeld2020gaussian} ).
	Hence, we have
	\begin{eqnarray}
	\text{DP}_\sigma\text{SWD}_q(\mu,\nu) &=
	\Big[\mathbf{E}_{ \u \sim \mathbb{S}^{d-1}}  W_q^q(\mathcal{R}_\u \mu * \mathcal{N}_\sigma,\mathcal{R}_\u \nu * \mathcal{N}_\sigma )\Big]^{1/q} \nonumber
	\\&\leq	\Big[\mathbf{E}_{ \u \sim \mathbb{S}^{d-1}} \big( W_q(\mathcal{R}_\u \mu * \mathcal{N}_\sigma,\mathcal{R}_\u \xi * \mathcal{N}_\sigma ) \nonumber
	\\&+  W_q(\mathcal{R}_\u \xi * \mathcal{N}_\sigma,\mathcal{R}_\u \nu * \mathcal{N}_\sigma )\big)^q \Big]^{1/q} \nonumber
	 	\\  &\leq	\Big[\mathbf{E}_{ \u \sim \mathbb{S}^{d-1}}  W_q^q(\mathcal{R}_\u \mu * \mathcal{N}_\sigma,\mathcal{R}_\u \xi * \mathcal{N}_\sigma ) \Big]^{1/q} \nonumber
	 \\&+  \Big[\mathbf{E}_{ \u \sim \mathbb{S}^{d-1}} W_q^q(\mathcal{R}_\u \xi * \mathcal{N}_\sigma,\mathcal{R}_\u \nu * \mathcal{N}_\sigma ) \Big]^{1/q} \nonumber
	 \\ &\leq \text{DP}_\sigma\text{SWD}_q(\mu,\xi) + \text{DP}_\sigma\text{SWD}_q(\xi,\nu) \hfill~\nonumber
	\end{eqnarray}
	where the first inequality comes from the fact that the smoothed Wassertein distance  $W_q( \mu * \mathcal{N}_\sigma, \nu * \mathcal{N}_\sigma )	$ is a metric and satisfies the triangle inequality and the second one follows from
	the application of the Minkowski inequality.
	
\end{proof}

\subsection{Experimental set-up}

\subsubsection{Dataset details}
\label{sec:data}
We have considered $3$ families of domain adaptation problems based on 
Digits, VisDA, Office-31. For all these datasets,
we have  considered the natural train/test number of examples.

For the digits problem, we have used the MNIST and the USPS datasets. For MNIST-USPS and USPS-MNIST, we have respectively used 60000-7438, 7438-10000 samples. 
The VisDA 2017 problem is a $12$-class classification problem with source and target domains being simulated and real images.
The Office-31 is an object categorization problem involving $31$ classes with a total of 4652 samples. There exists $3$ domains in the problem based on
the source of the images : Amazon (A), DSLR (D) and WebCam (W). We have considered all possible pairwise source-target domains. 

For the VisDA and Office datasets, we have considered Imagenet pre-trained ResNet-50 features and our feature extractor (which is a fully-connected feedforword networks) aims at adapting those features. We have used pre-trained features
freely available at \url{https://github.com/jindongwang/transferlearning/blob/master/data/dataset.md}. 

\subsubsection{Architecture details for domain adaptations}
\label{sec:architect}
\paragraph{Digits} For the MNIST-USPS problem, the architecture of our feature extractor is composed of the two CNN layers with 32 and 20 filters of size $5\times5$. The feature extractor uses a ReLU activation function a max pooling at the first layer
and a sigmoid activation function at the second one.
For the classification head, we have used a 2-layer fully connected networks as a classifier with $100$ and $10$ units. 

\paragraph{VisDA} For the VisDA dataset, we have considered pre-trained 2048 features obtained from a ResNet-50 followed by $2$ fully connected networks with $100$ units and ReLU activations. The latent space is thus of dimension $100$. Discriminators and classifiers are also a $2$ layer fully connected networks with $100$ and respectively 1 and ``number of class'' units.

\paragraph{Office 31} For the Office dataset, we have considered pre-trained 2048 features obtained from a ResNet-50 followed by two fully connected networks with output of $100$ and $50$ units and ReLU activations. The latent space is thus of dimension $50$. Discriminators and classifiers are also a $2$ layer fully connected networks with $50$ and respectively 1 and ``number of class'' units.

For Digits, VisDA and Office 31 problems, all models have been trained using Adam with learning rate validated on the non-private  model.

\subsubsection{Architecture details for generative modelling.}

For the MNIST, FashionMNIST generative modelling problems, we have
used the implementation of MERF available at \url{https://github.com/frhrdr/dp-merf} and plugged in our 	$\text{DP}_\sigma\text{SWD}$  distance.
The generator architecture we used is the same as theirs and detailed in 
Table \ref{tab:mnist}. The optimizer is an Adam optimizer with the default $0.0001$ learning rate. The code dimension is $10$ 
and is concatenated with the one-hot encoding of the $10$ class label,
leading to an overall input distribution of $20$.

For the CelebA generative modelling, we used the implementation of \citet{nguyen2021distributional} available at \url{https://github.com/VinAIResearch/DSW}. The  generator  mixes transpose convolution and batch normalization as
described in Table \ref{tab:archiCelebA}. The optimizer is an Adam optimizer with a learning rate of $0.0005$. Again, we have just plugged
in our $\text{DP}_\sigma\text{SWD}$ distance.

\begin{table}[t]
\caption{Description of the generator for the MNIST and FashionMNIST dataset.
	\label{tab:mnist}~\\}
\begin{tabular}{ll}
	\hline
	Module & Parameters \\\hline
	FC &  20 - 200 \\
	BatchNorm & $\epsilon=10^{-5}$, momentum=0.1 \\
	FC &  200 - 784 \\
	BatchNorm & $\epsilon=10^{-5}$, momentum=0.1 \\
	Reshape & 28 x 28  \\
	upsampling & factor = 2 \\
	Convolution & 5 x 5 + ReLU \\
	Upsampling & factor = 2 \\
	Convolution & 5 x 5 + Sigmoid \\
	\hline
\end{tabular}
\end{table}

\begin{table*}	
	\centering
	\caption{Model hyperparameters and privacy for achieving
	a $\varepsilon-\delta$ privacy with $\varepsilon=10$ and $\delta$ depending on 
the size of the private dataset. The four first lines refers to the domain adaptation problems and the data to protect is the private one. The last two rows refer to the generative modelling problems. The noise $\sigma$ has been obtained using
the RDP based moment accountant of \citet{xiang2020}. \label{tab:parameter} }~\\
	\begin{tabular}{lccccccc}\hline
data & $\delta$&$d$ & $k$ & N & \#epoch& batch size & $\sigma$ \\\hline 
U-M & $10^{-5}$ & 784 & 200 & 10000 &100 & 128 & 4.74 \\
M-U & $10^{-5}$ & 784 & 200 & 7438 &100& 128 & 5.34 \\
VisDA & $10^{-5}$ & 100 & 1000 & 55387 & 50 &  128 & 6.40  \\
Office & $10^{-3}$ & 50 & 100 & 497 & 50 & 32 & 8.05 \\ \hline 
MNIST (b)& $10^{-5}$ & 784 & 1000 & 60000 & 100 & 100 & 2.94 \\
MNIST (c)& $10^{-5}$ & 784 & 1000 & 60000 & 100 & 100 & 0.84 \\
CelebA (b) & $10^{-6}$ & 8192 & 2000& 162K & 100 & 256 & 2.392 \\\hline  
CelebA (c) & $10^{-6}$ & 8192 & 2000& 162K & 100 & 256 & 0.37 \\\hline  
	\end{tabular}
\end{table*}
\begin{table}[t]
	\caption{Description of the generator for the CelebA dataset. The input code
		is of size $32$ and the output is $64 \times 64 \times 3$.
		\label{tab:archiCelebA}~\\}
	\begin{tabular}{ll}
		\hline
		Module & Parameters \\\hline
		Transpose Convolution &  32 - 512, kernel = 4x4, stride = 1 \\
		BatchNorm &  $\epsilon=10^{-5}$, momentum=0.1  \\
		ReLU & \\
		Transpose Convolution &  512 - 256, kernel = 4x4, stride = 1 \\
BatchNorm & $\epsilon=10^{-5}$, momentum=0.1  \\
ReLU & \\
		Transpose Convolution &  256 - 128, kernel = 4x4, stride = 1\\
BatchNorm &  $\epsilon=10^{-5}$, momentum=0.1 \\
ReLU & \\
		Transpose Convolution &  128 - 64, kernel = 4x4, stride = 1\\
BatchNorm &   $\epsilon=10^{-5}$, momentum=0.1\\
ReLU & \\
		Transpose Convolution &  64 - 3, kernel = 4x4, stride = 1\\
BatchNorm & $\epsilon=10^{-5}$, momentum=0.1  \\ 
Tanh & \\\hline
	\end{tabular}
\end{table}

\end{document}